\documentclass[runningheads]{llncs}

\usepackage{eccv}

\usepackage{eccvabbrv}

\usepackage{graphicx}
\usepackage{booktabs}
\usepackage{pifont}

\usepackage{colortbl}
\usepackage{color,xcolor}
\usepackage{algorithm,algorithmic}

\usepackage[accsupp]{axessibility}  

\usepackage{hyperref}

\usepackage{orcidlink}

\begin{document}

\title{Vision-TTT: Efficient and Expressive Visual Representation Learning with Test-Time Training} 

\titlerunning{Vision-TTT}

\author{Quan Kong\inst{1}\orcidlink{} \and
Yanru Xiao\inst{2}\orcidlink{} \and
Yuhao Shen\inst{1}\orcidlink{} \and
Cong Wang\inst{1}\orcidlink{}}

\authorrunning{Q.~Kong et al.}

\institute{${}^\mathsection$Zhejiang University, Hangzhou, China \and
${}^\diamondsuit$Amazon Web Services, USA\\
\email{\{qkv, riven, cwang85\}@zju.edu.cn}\\
\email{yxiao002@odu.edu} 
}

\maketitle

\begin{abstract}
Learning efficient and expressive visual representation has long been the pursuit of computer vision research. While Vision Transformers (ViTs) gradually replace traditional Convolutional Neural Networks (CNNs) as more scalable vision learners, their applications are plagued by the quadratic complexity of the self-attention mechanism. To address the challenge, we introduce a new linear-time sequence modeling method Test-Time Training (TTT) into vision and propose Vision-TTT, which treats visual sequences as datasets and compresses the visual token sequences in a novel self-supervised learning manner. By incorporating the dual-dataset strategy and Conv2d-based dataset preprocessing, Vision-TTT effectively extends vanilla TTT to model 2D visual correlations with global receptive fields. Extensive experiments show that \texttt{Vittt-T/S/B} achieve $77.7\%,81.8\%,82.7\%$ Top-1 accuracy on ImageNet classification and also greatly outperform their counterparts on downstream tasks. At $1280\times1280$ resolution, \texttt{Vittt-T} reduces FLOPs by $79.4\%$ and runs $4.72\times$ faster with $88.9\%$ less memory than DeiT-T. These results demonstrate the expressiveness and efficiency of Vision-TTT as a strong candidate for the next-generation generic visual backbone. 
\vspace{-0.1in}
  \keywords{Visual representation learning \and Linear Complexity Visual Sequence Modeling \and Test-Time Training }
\end{abstract}

\section{Introduction}
\label{sec:intro}
Learning efficient and expressive visual representation learning has long been a fundamental task in computer vision research. Convolutional Neural Networks (CNNs)~\cite{VGG,ResNet,DenseNet,ResNeXt,HRNet,EfficientNetV1,EfficientNetV2} represent classic architectures in this domain. They can efficiently capture spatial hierarchies, yet are limited by the static nature of convolutional kernels, which restricts their performance scalability. In recent years, Vision Transformers (ViTs)~\cite{dosovitskiy2020vit,deit,CrossAttention, Container,t2tViT,SwinTransformer} have explored the self-attention mechanism for visual sequence modeling. With superior scalability demonstrated on large‑scale experiments~\cite{CLIP,Simeoni2025DINOv3}, ViTs have gradually become a standard architectural paradigm in both academia and industry.

However, as the community’s demand for high-resolution image understanding continues to grow, ViTs are severely plagued by their quadratic computational complexity~\cite{Transformer} when processing long-sequence tasks. To tackle this bottleneck, researchers have been striving to pursue a Pareto-optimal architecture that balances expressiveness and efficiency. Among the emerging visual sequence models~\cite{vim, VMamba, MambaVisionAH, VisionRWKVEA, ViG}, Vision-Mambas~\cite{vim, VMamba} represent the most popular and representative approaches that build upon the State Space Models (SSMs) with selective scan design and hardware-aware parallelism~\cite{mamba, mamba2}.

In this paper, we build on a new linear-time sequence modeling paradigm Test-Time Training (TTT)~\cite{sun2024learning} and propose the Vision-TTT architecture, which establishes a new Pareto front between the expressiveness and efficiency in visual representation. In the context of visual sequence modeling~\cite{dosovitskiy2020vit,vim,VisionRWKVEA,ViG}, the core idea of TTT is to treat the visual token sequence of an image as a dataset, along which self-supervised learning is performed to compress visual semantics into a hidden state via gradient updates. As a result, the token semantics are explicitly guided by the gradients to form an expressive and explainable visual representation. Furthermore, linear-complexity efficiency can also be achieved through hardware-aware matrix multiplication parallelism~\cite{sun2024learning}. 

\begin{figure*}[t]
    \centering 
    \begin{subfigure}[b]{0.32\linewidth}
        \centering
        \includegraphics[width=1.0\linewidth]{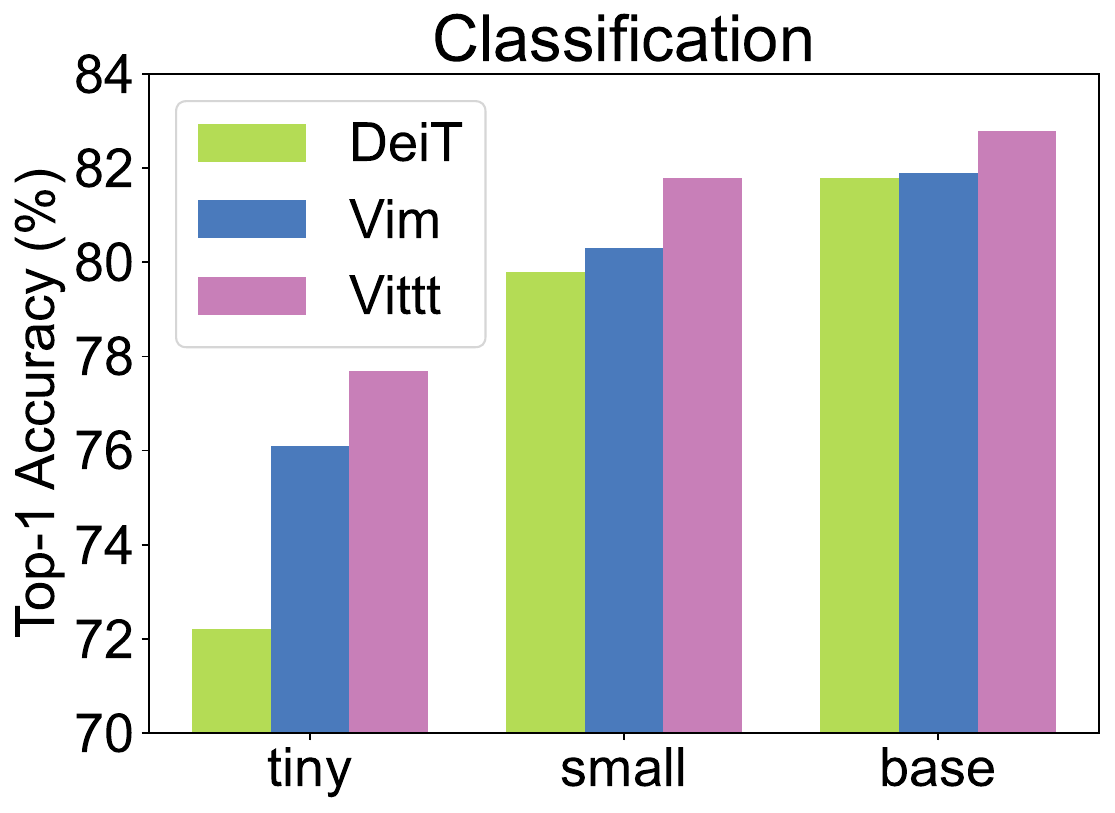} 
    \end{subfigure}
    \hfill
    \begin{subfigure}[b]{0.32\linewidth}
        \centering
        \includegraphics[width=1.0\linewidth]{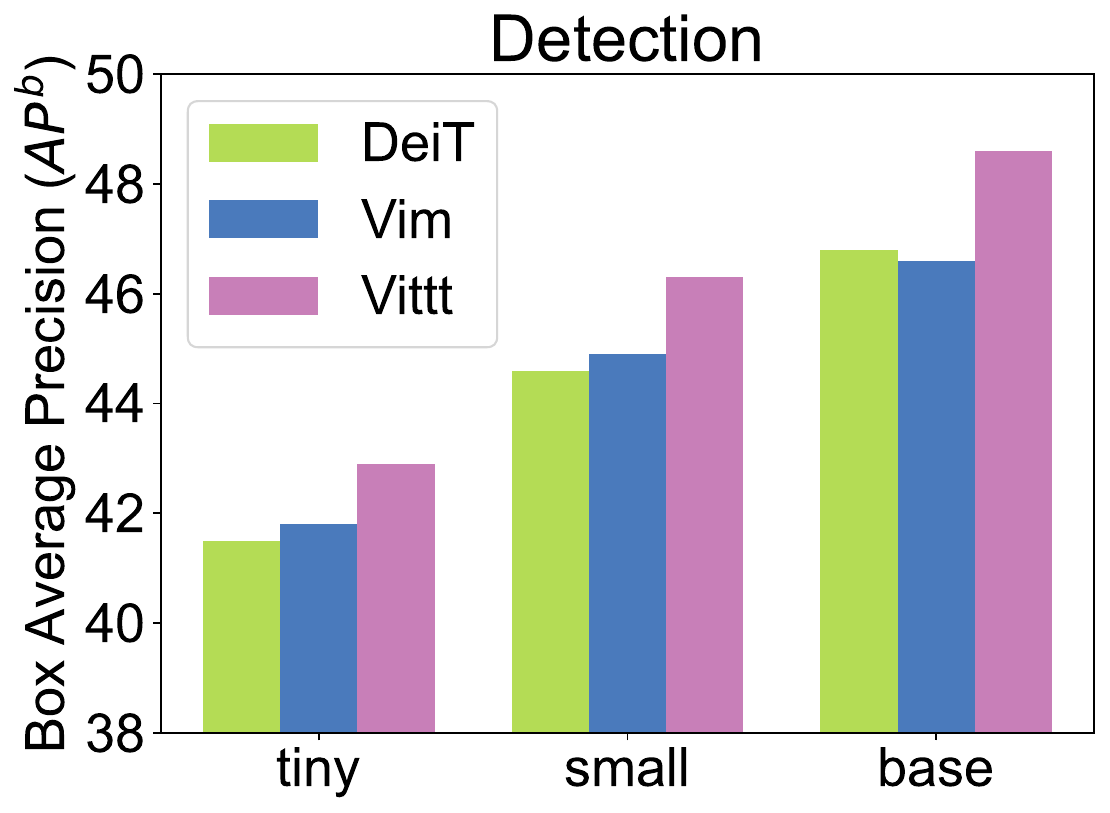} 
    \end{subfigure}
    \hfill
    \begin{subfigure}[b]{0.32\linewidth}
        \centering
        \includegraphics[width=1.0\linewidth]{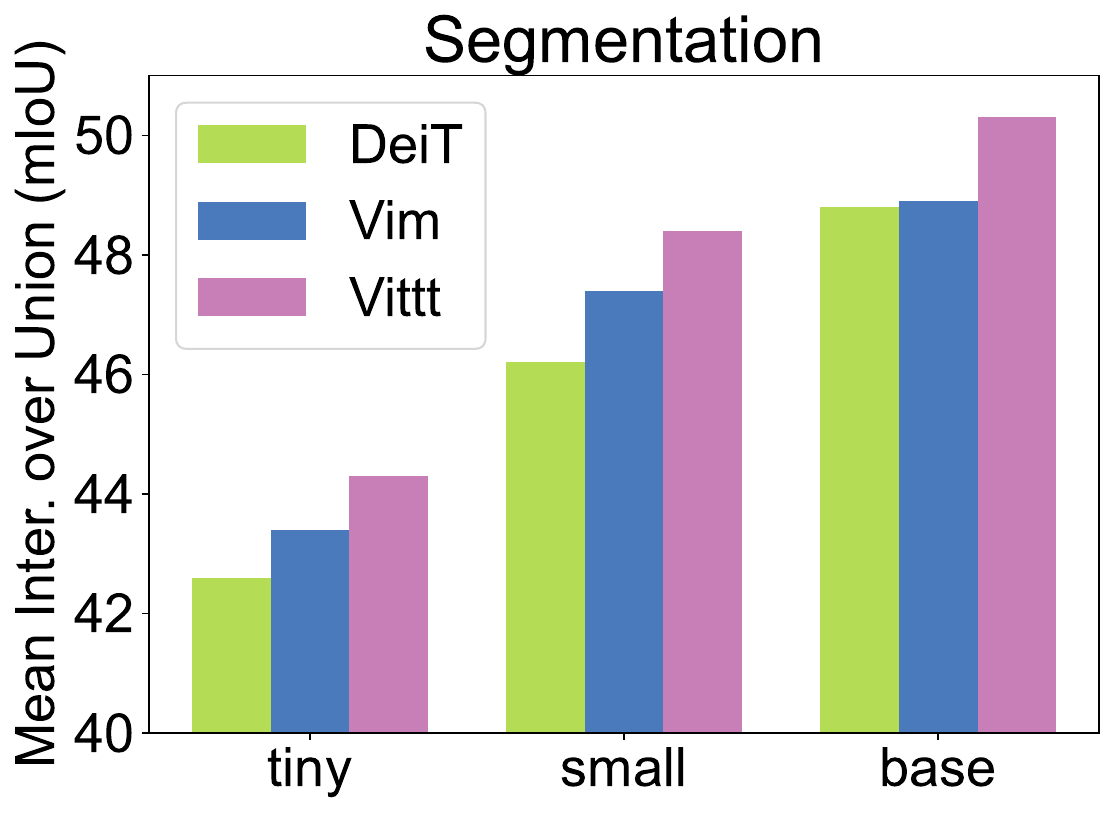} 
    \end{subfigure}
    
    \begin{subfigure}[b]{0.32\linewidth}
        \centering
        \includegraphics[width=1.0\linewidth]{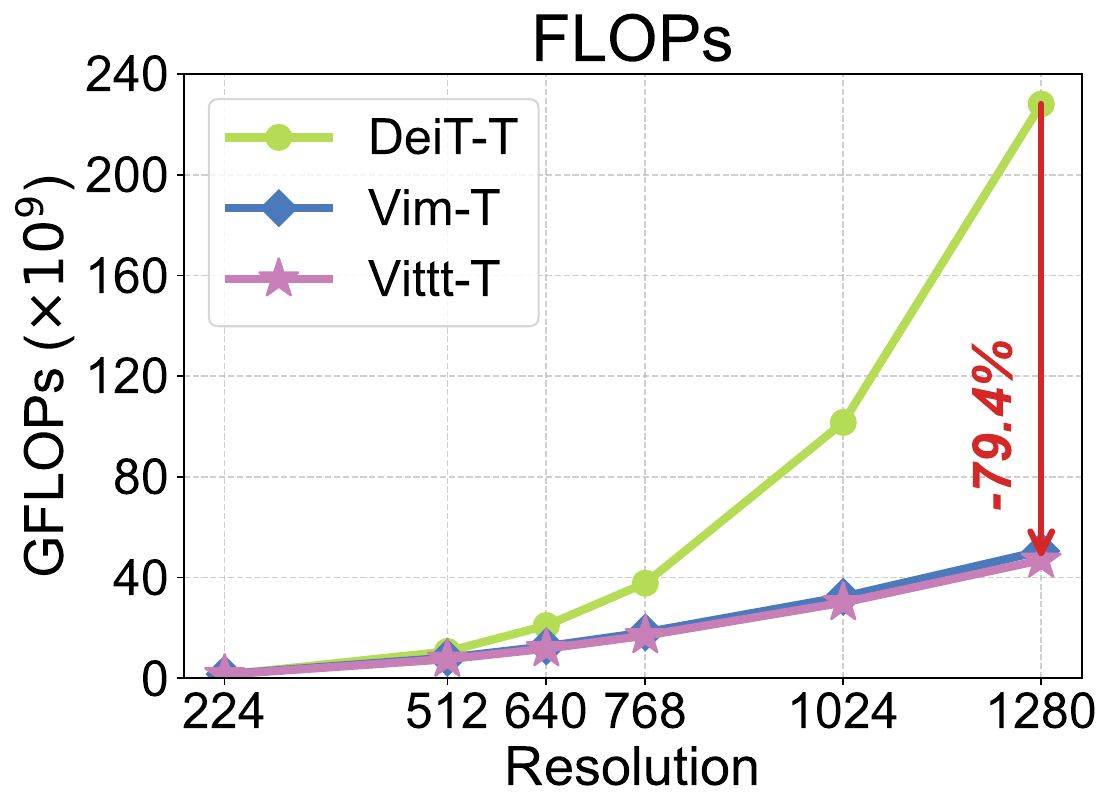} 
    \end{subfigure}
    \hfill
    \begin{subfigure}[b]{0.32\linewidth}
        \centering
        \includegraphics[width=1.0\linewidth]{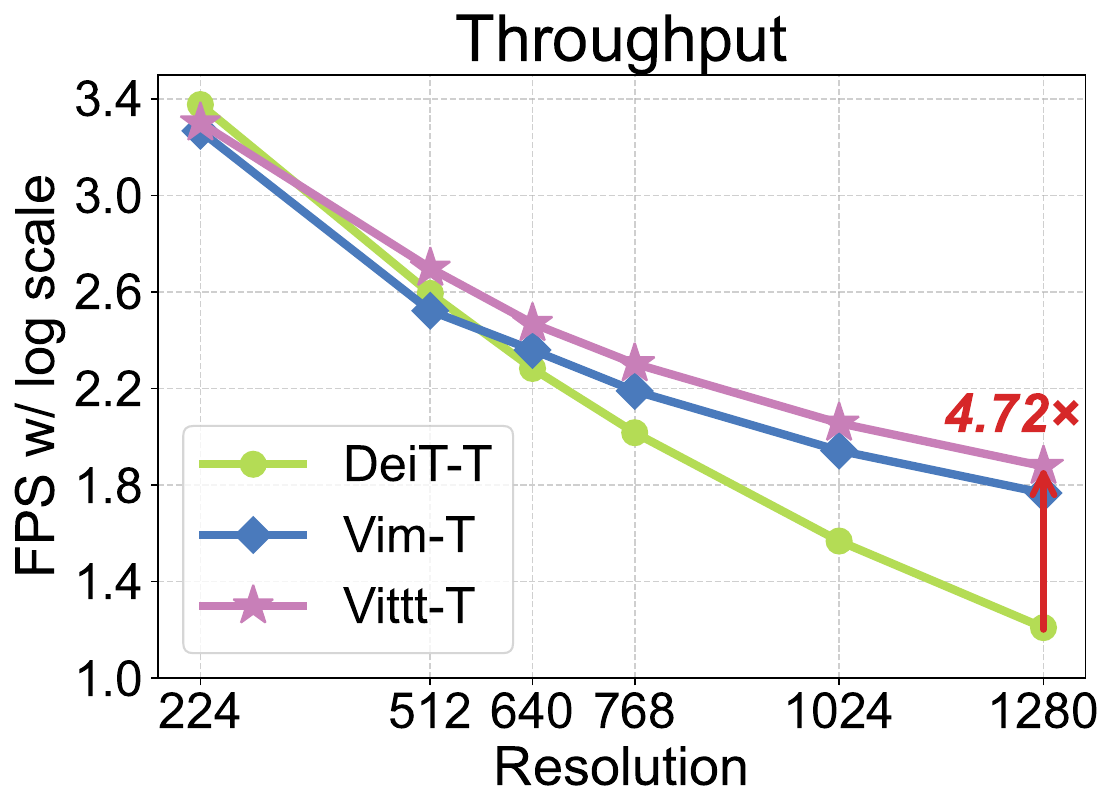} 
    \end{subfigure}
    \hfill
    \begin{subfigure}[b]{0.32\linewidth}
        \centering
        \includegraphics[width=1.0\linewidth]{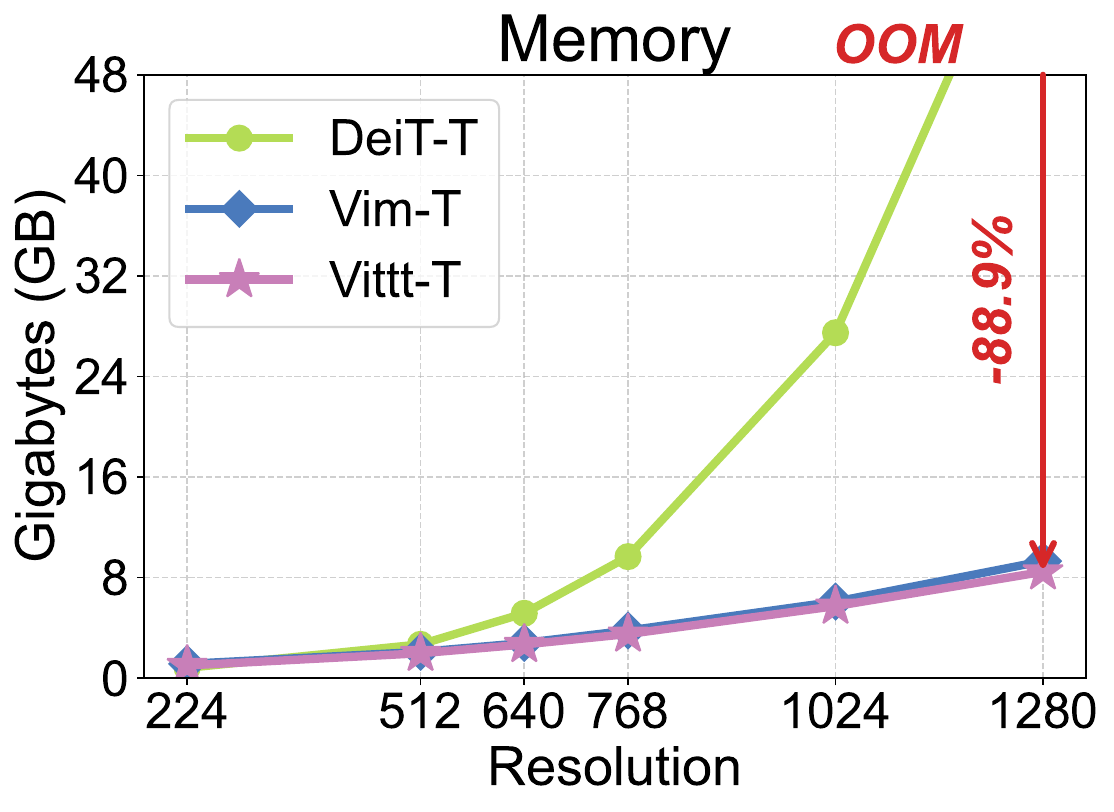}
    \end{subfigure}
    
    \vspace{-0.1in}
    \caption{\small{Performance and efficiency comparison between DeiT~\cite{deit}, Vim~\cite{vim} and our \texttt{Vittt} model. Results show that \texttt{Vittt} not only achieves superior performance on ImageNet classification and downstream detection and segmentation tasks, but also is more computation and memory efficient in dealing with high-resolution images. }}
    \label{fig:comparison}
    \vspace{-0.3in}
\end{figure*}

Despite this, vanilla TTT is originally designed for unidirectional modeling with inherent temporal dependence. This renders its global perception advantages in NLP inadequate for fully global modeling in visual tasks. To introduce spatial locality, we construct dual datasets~\cite{BidirectionalRNN} for the self-supervised learning and incorporate Conv2d-based~\cite{DWConv} dataset preprocessing mechanism. Benefiting from these architectural designs, Vision-TTT exhibits a globally radial Effective Receptive Field (ERF)~\cite{Luo2016UnderstandingTE} to construct representations with explicit 2D correlations. As shown in Fig.~\ref{fig:comparison}, \texttt{Vittt} greatly outperforms the strong baselines of DeiT~\cite{deit} and Vim~\cite{vim} in both classification and downstream tasks, while keeping linear computational and memory complexity for high-resolution images. 

The main contributions of this paper are summarized as follows:
\begin{itemize}
    \item[\ding{71}] We propose Vision-TTT, the first generic visual backbone to leverage Test-Time Training RNNs with gradient-driven state adaptation to capture visual semantics and build expressive visual representations.

     \item[\ding{71}] By integrating hardware-aware kernel implementation, Vision-TTT alleviates the quadratic complexity bottleneck of Vision Transformers and achieves linear complexity modeling. At $1280\times1280$ resolution, \texttt{Vittt-T} saves $79.4\%$ FLOPs and runs $4.72\times$ faster with $88.9\%$ less memory than DeiT-T.  

     \item[\ding{71}] With a dedicated 2D architectural design, Vision-TTT expands vanilla TTT to visual representation tasks with spatial locality. Extensive experiments show that \texttt{Vittt-T/S/B} achieve $77.7\%,81.8\%,82.7\%$ Top-1 accuracy on ImageNet classification, and significantly outperform their counterparts on downstream COCO detection and ADE20K segmentation tasks.
\end{itemize}

\vspace{-0.2in}
\section{Related Work}
\vspace{-0.1in}
\noindent \textbf{Visual Representation Learning.} Visual representation Learning serves as the foundation for a wide range of computer vision tasks~\cite{ImageNet,COCO,ADE20K}. It has evolved into two mainstream paradigms, convolution-based method and visual sequence modeling. Convolutional Neural Networks (CNNs)~\cite{AlexNet,VGG,DenseNet,ResNet,HRNet,EfficientNetV1, EfficientNetV2,ConvNeXt,RepLKNet,RegNet, MoreConvNets, DeformableConvs} have long been the dominant architecture in this field due to their efficiency in processing spatial vision data. However, CNNs are limited by the static nature of convolution kernels~\cite{Fukushima1980NeocognitronAS}, which restricts their scalability in performance. In recent years, Vision Transformer (ViT)~\cite{dosovitskiy2020vit} has pioneered the paradigm of visual sequence modeling, and DeiT~\cite{deit} further refines ViT with advanced training techniques and distillation. They ignite a series of follow-up works that incorporate hierarchical designs~\cite{t2tViT,DaVit-DualAttention, CSWinTA, TransformerPyramid, PyramidViT, HiViT} or dedicated convolution operations~\cite{Container, CoAtNet-MarryConvAndAttention}. Despite their remarkable performance, ViTs suffer from the quadratic complexity of the self-attention mechanism~\cite{Transformer} when dealing with high-resolution images. 

\noindent \textbf{Linear Complexity Visual Sequence Modeling.} To pursue linear complexity representation, various RNN-like sequence models~\cite{mamba,RWKV,Yang2023GatedLA,sun2024learning} have attracted growing interest. Building upon State Space Models (SSMs)~\cite{S4,mamba, mamba2}, Vim~\cite{vim} and VMamba~\cite{VMamba} focus on employing 2D scanning strategies to gain global perception. Their variants~\cite{MIM,EF-VMamba,Yang2024PlainMambaIN,Huang2024LocalMambaVS} mainly improve the efficiency-expressiveness trade-off with visually-adapted SSMs, while MambaVision~\cite{MambaVisionAH} adopts a hybrid architecture of SSM and SwinTransformer~\cite{SwinTransformer} blocks for local-to-global perception. In addition, Vision-RWKV~\cite{VisionRWKVEA} introduces temporal decay to capture long-range dependency and ViG~\cite{ViG} adopts gated linear attention~\cite{Yang2023GatedLA} to enhance expressiveness. Except for ViT$^3$~\cite{han2025vit3}, the impact of Test-Time Training (TTT)~\cite{sun2024learning} for generic visual representation still remains under-explored.

\noindent \textbf{Test-Time Training.} The early multi-task learning version of TTT~\cite{TTTDistributionShifts} introduces an auxiliary self-supervised task parallel to the main task, aiming to address the distribution shift problem in test images~\cite{TTTVideoStream,TTTImageObjects}. Recently, TTT~\cite{sun2024learning,sun2023learning,behrouz2025atlas} has been revisited in a meta-learning~\cite{LearningToLearn} perspective for adaptive sequence modeling, which involves an inner loop for self-supervised reconstruction and an outer loop for different main tasks. The machanism has been applied to the domain of video generation~\cite{dalal2025one,zhang2025test} and 3D reconstruction~\cite{chen2025ttt3r}. \textbf{Our work focuses on the meta-learning formulation of TTT~\cite{sun2024learning}} as an RNN-like visual sequence modeling approach to learn efficient and expressive visual representations.

\vspace{-0.1in}
\section{Preliminary}
\vspace{-0.3in}
\begin{figure*}
    \centering
\includegraphics[width=0.75\linewidth]{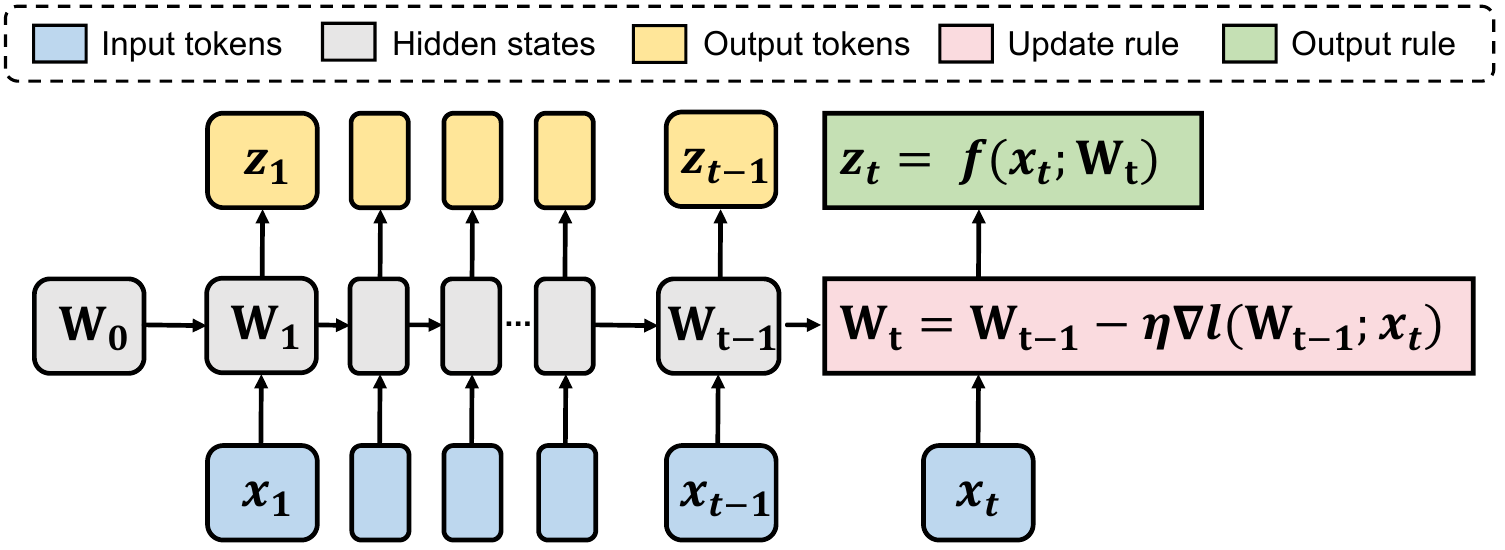}
    \caption{\small{Basics of TTT. The core idea is to update the hidden state $\mathrm{W}$ with steps of self-supervised gradient descent and then forward the output to the next layer.}}
    \label{fig:ttt}
\vspace{-0.2in}
\end{figure*}

Similar to most recurrent models like LSTM~\cite{LSTM}, Mamba~\cite{mamba} and RWKV~\cite{RWKV}, TTT is also a special RNN that maintains a hidden state $\mathrm{W} \in \mathbb{R}^{D \times D}$ and follows an update rule and an output rule. As illustrated in Fig.~\ref{fig:ttt}, the core idea is to treat the token sequence $X=[x_1,\cdots,x_T]$ of a sample as a dataset, where self-supervised learning is performed to compress visual semantics into the hidden state $\mathrm{W}$. \textbf{The update rule} is written as the following gradient descent form,
\begin{equation}
\mathrm{W}_t = \mathrm{W}_{t-1} - \eta \nabla_{\mathrm{W}_{t-1}} \ell(\mathrm{W}_{t-1}; x_t),
\label{eq:ttt_sgd}
\end{equation}
and \textbf{the output rule} for the output sequence $Z=[z_1,\cdots,z_T]$ is given by,
\begin{equation}
z_t = f(x_t;\mathrm{W}_t)=\mathrm{W}_tx_t,
\label{output_rule}
\end{equation}
where $\ell$ is the self-supervised loss, $\eta$ is the learning rate and $f$ represents the mapping from $X$ to $Z$. This adaptation occurs continuously during both training and testing, enabling the model to refine its internal representations in response to observed data patterns, thus called ``Test-Time Training''~\cite{sun2024learning}.

\section{Vision-TTT: Vision Test-Time Training}
\subsection{TTT as a Vision Learner}
\label{sec:ttt_basics}
\noindent \textbf{Formulation of Vision-TTT.} Analogous to the attention mechanism~\cite{Transformer}, given the input visual sequence $X=[x_1,\cdots,x_T]$, we first project it to the key, query and value vectors, 
\begin{equation}
X^K, X^Q,X^V=\theta_KX, \theta_QX, \theta_VX, \label{ttt_kqv}
\end{equation}
where $\theta_K,\theta_Q,\theta_V$ are linear projection matrices.
 The self-supervised task can be formulated as a reconstruction problem of the original input and the target is set to minimize the $L2$ norm loss,
\begin{equation}
\ell(\mathrm{W}_{t-1};x_t)=\Vert{f(x_t^K;\mathrm{W}_{t-1})-x_t^V}\Vert^2=\Vert{\mathrm{W}_{t-1}x_t^K-x_t^V}\Vert^2,  \label{ttt_reconstruct}
\end{equation}
with the output token generated as,
\begin{equation}
z_t=f(x^Q_t;\mathrm{W}_t)=\mathrm{W}_tx_t^Q, \label{ttt_output}
\end{equation}
where $x_t^K,x_t^Q,x_t^V$ are slices of $X^K,X^Q,X^V$ at timestep $t$, respectively. By treating $(X^K,X^V)=\{(x_t^K,x_t^V)\},t=1,...,T$ as the training dataset and $(X^Q)=\{x_t^Q\}$ as the test dataset, this represents reconstructing the $x_t^V$ (label view) with the projected $x_t^K$ (training view) and querying with $x_t^Q$ (test view)~\cite{sun2024learning}. 

\begin{figure*}
    \centering
    \vspace{-0.2in}
\includegraphics[width=0.66\linewidth]{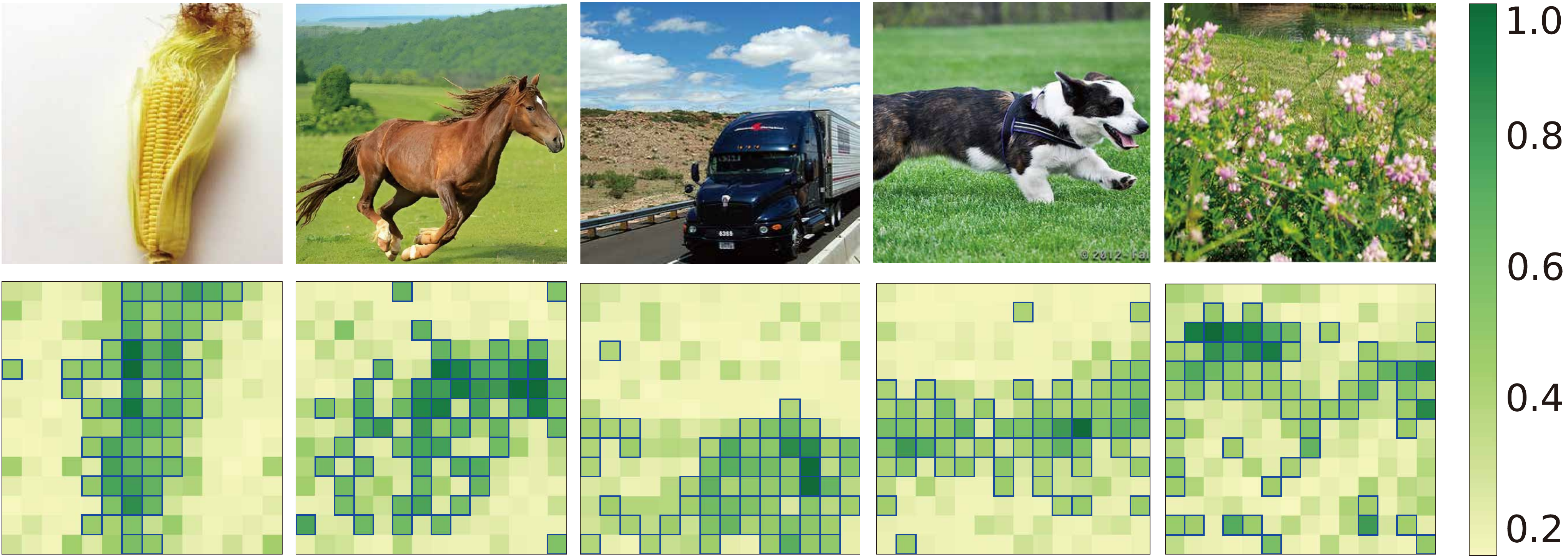}
    \caption{\small{Gradient Magnitude Map (illustrated by Eq.~\eqref{ttt_Gt}). Vision-TTT employs gradients as the explicit indicators to quantify visual token semantics.}}
    \label{fig:GMM}
\vspace{-0.2in}
\end{figure*}

\noindent\textbf{Interpretability of Vision-TTT.} By taking the derivative of $W_{t-1}$ in Eq.~\eqref{ttt_reconstruct} with respect to the self-supervised loss $\ell$, we can obtain the gradient $G_t$ to quantify the token importance at timestep $t$,
\begin{equation}
G_t=\nabla_{\mathrm{W}_{t-1}}\ell(\mathrm{W}_{t-1}; x_t)=2(\mathrm{W}_{t-1}x_t^K-x_t^V)(x_t^K)^T.  \label{ttt_Gt}
\end{equation}
As shown in Fig.~\ref{fig:GMM}, we visualize the $G_t$ distribution on the 2D patch tokens. We observe that tokens with significant visual semantics exhibit larger gradient signal values, which means Vision-TTT has a clear understanding of different regions after training. This provides us with an inherent explainability tool, just as the attention map visualization for vision transformers.

\noindent\textbf{Efficiency of Vision-TTT.} The original token-wise TTT relies on sequential computation, and thus suffers from insufficient efficiency. To fully exploit  the parallelism of modern GPUs, we reduce the hidden size from $\mathrm{W} \in \mathbb{R}^{D \times D}$ to $\mathrm{W} \in \mathbb{R}^{nh\times d \times d}$ with multi-head mechanism~\cite{Transformer}, where $nh$ is the number of heads, $d$ is the head dimension, and $D=nh\times d$. In addition, the granularity of gradient descent is changed from 1 to 16. In other words, we perform the mini-batch gradient descent along the visual token sequence with mini-batch size $b=16$. The update rule within the mini-batch $b$ then follows a causal form~\cite{LinearAttention}. For instance, starting from the initial values of $\mathrm{W_0}$, the hidden state at timestep $t\ (0< t\leq b)$ within the first mini-batch is expressed as,  
\begin{equation}
\mathrm{W}_t=\mathrm{W}_{t-1}-\eta G_t=\mathrm{W}_0-\eta\sum_{s=1}^t G_s.  \label{ttt_minibatch}
\end{equation}
By leveraging the small matrix parallel computing capability of Tensor Cores ($16\times16$), we achieve linear-time throughput. Referring to the great work of TTT in NLP~\cite{sun2024learning}, we encapsulate the forward and backward kernels with Triton~\cite{Tillet2019TritonAI} to transform the linear modeling capability of TTT from theory into reality. Due to space, the detailed kernel implementation is deferred to the Appendix.

\subsection{Overall Architecture}
\label{sec:archi}

\begin{figure*}
    \centering
\includegraphics[width=0.75\linewidth]{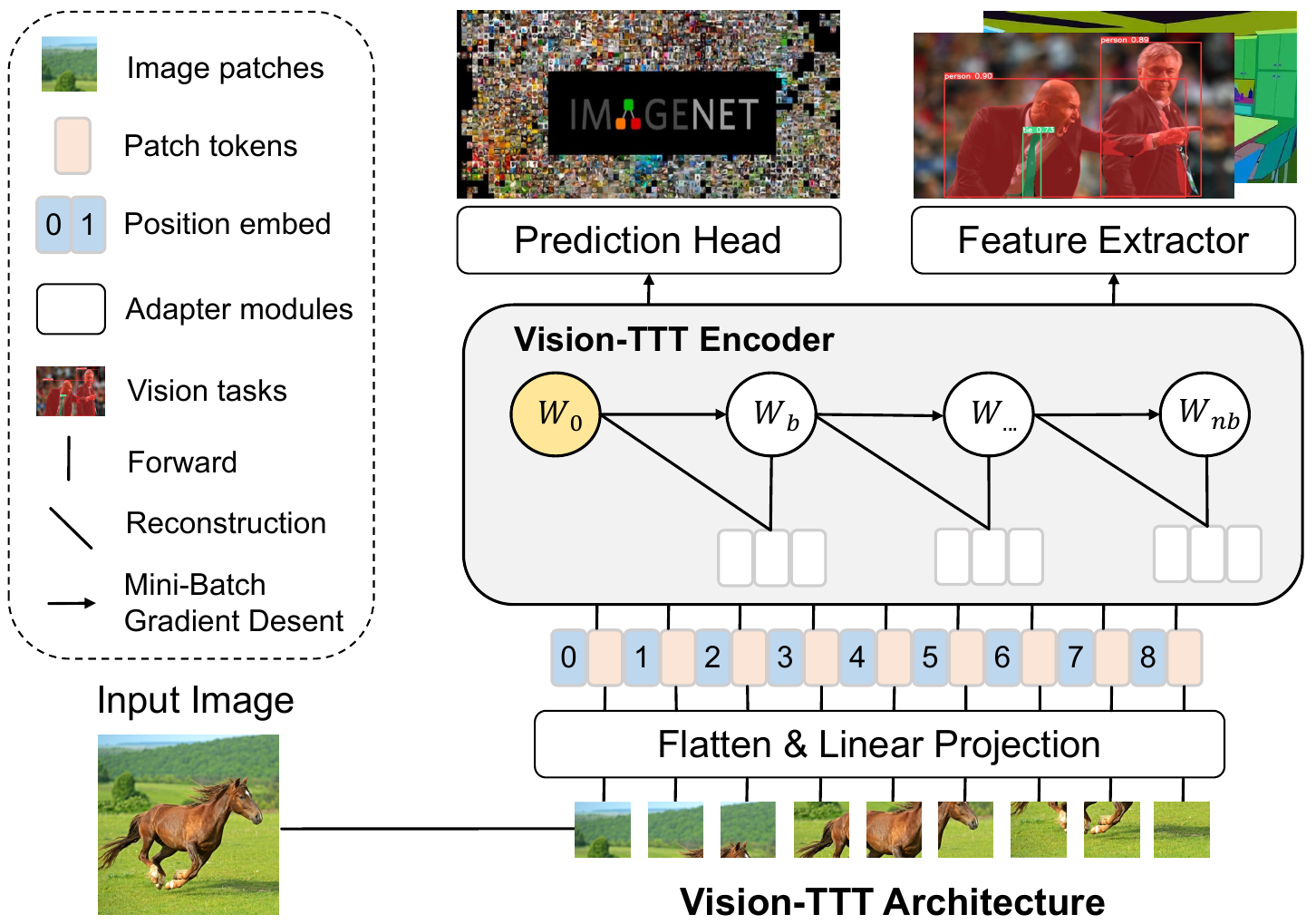}
    \caption{\small{Overall architecture of Vision-TTT. It consists of three stages: Patchification, Vision-TTT Encoder, and Task Adapters to learn representation for vision tasks.}}
    \label{fig:architecture}
\vspace{-0.2in}
\end{figure*}
As illustrated in Fig.~\ref{fig:architecture}, Vision-TTT consists of three stages as follows.

\noindent \ding{202} \textbf{Patchification.} The input image $I\in \mathbb{R}^{H\times W\times C}$ is split into a sequence of 1D patches $x_p\in \mathbb{R}^{T\times(P^2\cdot C)}$~\cite{dosovitskiy2020vit, vim}, where $(H, W)$ is the resolution of the original image, $C$ is the channel, $P^2$ is the size of each rectangle patch and the sequence length is $T=\frac{HW}{P^2}$. Then a learnable projection $E\in \mathbb{R}^{(P^2\cdot C)\times D}$ is appended and a position embedding $E_{\text{pos}}\in \mathbb{R}^{(T+1)\times D}$ is added to form the processed input $y_0$ of feature embedding $D$,
\begin{equation}
y_0 = [x_p^1E, x_p^2E, \cdots, x_p^TE] + E_{\text{pos}},
\end{equation}

\noindent \ding{203} \textbf{Vision-TTT Encoder.} The processed input is then fed into $L$ hybrid blocks. Each of them consists of a \texttt{Vittt} block and a successive \texttt{SwiGluMLP}~\cite{SwiGluMLP}, a variant of Gated Linear Units~\cite{GLU}.
\begin{eqnarray}
\vspace{-0.05in}
&& y_l' = \texttt{Vittt}\bigl(\text{LN}(y_{l-1})\bigr)+y_{l-1}, \nonumber \\
&& y_l = \texttt{SwiGluMLP}\bigl(\text{LN}(y_l')\bigr)+y_{l}',
\vspace{-0.05in}
\end{eqnarray}
where $\text{LN}$ denotes layer normalization, $y_l{'}$ is the intermediate activation. 

\noindent \ding{204} \textbf{Task Adapters.} During the ImageNet~\cite{ImageNet} pretraining stage, we adopt mean pool and a linear detection head to extract the class possibility $\hat{p}$ as follows,
\begin{equation}
\hat{p} = \texttt{Linear}\Bigl(\texttt{MeanPool}\bigl(\text{LN}(y_L)\bigr)\Bigr).
\end{equation}
When finetuning for downstream COCO~\cite{COCO} detection and ADE20K~\cite{ADE20K} segmentation tasks, the common practice is to freeze the encoder backbone and only train the feature extractors for adaptation.

\subsection{Design of \texttt{Vittt} Block}
\label{sec:VitttBlockDesign}
\begin{figure*}
    \centering
\vspace{-0.3in}
\includegraphics[width=1.0\linewidth]{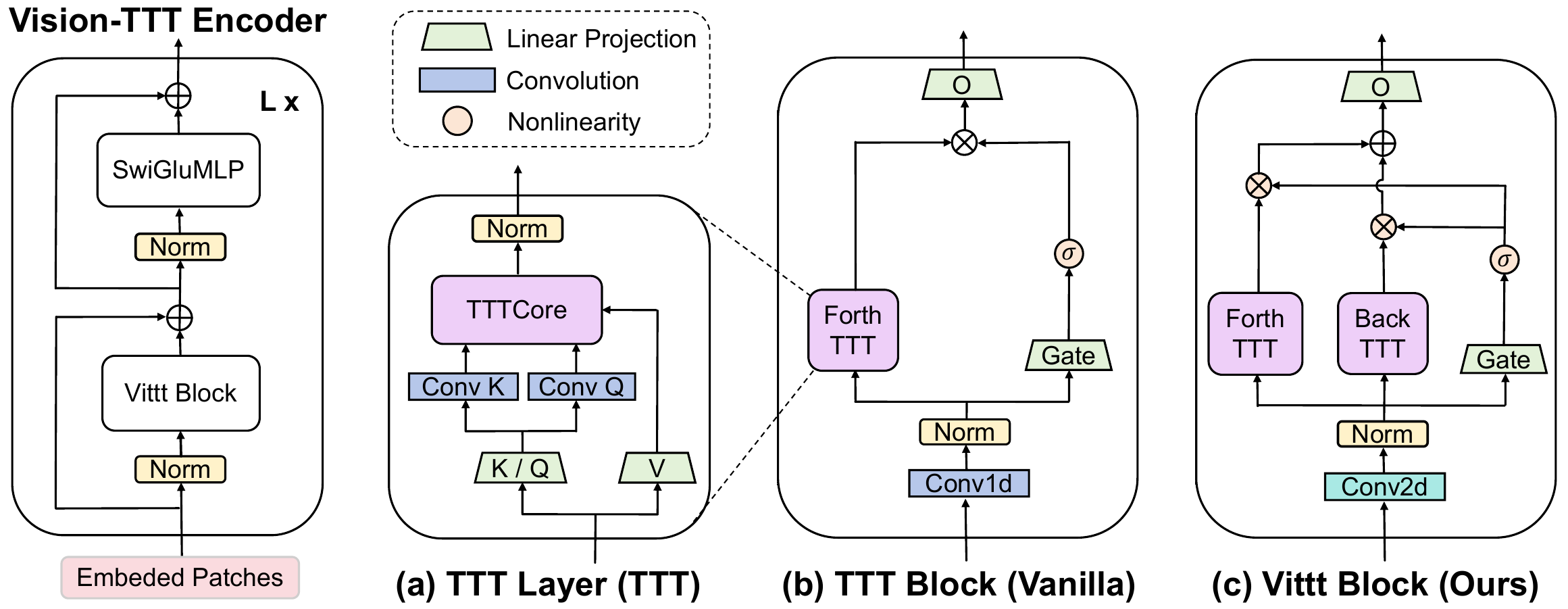}
\vspace{-0.2in}
    \caption{\small{\texttt{Vittt} Block. It evolves from vanilla TTT with the dual dataset strategy and the Conv2d-based dataset preprocessing mechanism for inner self-supervised learning.}}
    \label{fig:vitttblock}
\vspace{-0.2in}
\end{figure*}

The design route from the original TTT layer~\cite{sun2024learning} to our \texttt{Vittt} block is illustrated in Fig.~\ref{fig:vitttblock}. The projection matrices Q / K of the TTT layer are shared to save parameters, and  the vanilla TTT block~\cite{sun2024learning} is essentially a gated linear version. From the view of self-supervised learning in Sec.~\ref{sec:ttt_basics}, the input sequence $X$ gives one dataset $(X^K,X^V)=\{(x_t^K,x_t^V)\},t=1,...,T$. The vanilla TTT (Fig.~\ref{fig:vitttblock} (b)) is formulated with Conv1d-based dataset preprocessing mechanism, 
\vspace{-0.1in}
\begin{equation}
Z_\text{Gated} = \sigma(\texttt{Linear}(X_{\mathrm{Conv1d}})) \odot Z. 
\end{equation}

Although the vanilla TTT block is a promising vision learner, it is originally designed for unidirectional modeling, which conflicts with 2D spatial data. To introduce spatial locality, we further integrate two 2D architectural designs.

\noindent\textbf{Dual Dataset Strategy.} We construct another dataset in the backward direction $(X^{K^{'}},X^{V^{'}})=\{(x_t^{K^{'}},x_t^{V^{'}})\},t=T,...,1$ to alleviate the unidirectional bias of 1D global sequence modeling~\cite{BidirectionalRNN, vim}, where the self-supervised learning performs simultaneously in the forward ($Z_\text{forth}$) and backward ($Z_\text{back}$) routes,
\begin{equation}
Z_\text{Dual} = \sigma(\texttt{Linear} (X_\text{{Conv2d}})) \odot (Z_{\text{forth}} + \texttt{flip}(Z_{\text{back}})). 
\end{equation}
\noindent\textbf{Conv2d-based Dataset Preprocessing.} The $X_\text{Conv2d}=\texttt{DWConv}(X)$ is the pre-processed result of the 1D input sequence $X$ for 2D data augmentation of the self-supervised learning. We use depth-wise convolution~\cite{DWConv} to reduce the parameters introduced ($0.02 \mathrm{M}, 0.04 \mathrm{M},0.08 \mathrm{M}$ for $\texttt{Vittt-T/S/B}$).  

For TTT to perform self-supervised learning along the visual sequence datasets, the dual dataset strategy and the Conv2d-based dataset preprocessing mechanism diversify the datasets to 2D, enabling effective adaptation to visual tasks.  

\vspace{-0.1in}
\section{Experiments}
We conduct extensive experiments to validate the expressiveness of Vision-TTT, including ImageNet~\cite{ImageNet} classification and downstream COCO~\cite{COCO} detection and ADE20K~\cite{ADE20K} segmentation tasks. We also demonstrate its linear efficiency by measuring the FLOPs, throughput and memory metrics across resolutions.


\subsection{Image Classification}
\begin{table}[tb]
  \centering
  \caption{\small{Performance comparison on ImageNet-1K classification. We report the parameters (\#P.), FLOPs and Top-1 Accuracy (\%) metrics. \emph{Left}: hierarchical architectures; \emph{Right}: non-hierarchical architectures.}}
  \label{tab:classification}
  \footnotesize
    \begin{tabular}{l|cc|c|c}
    \toprule
    Method &\text{\ size\ } & \text{\#P.\ }&\text{\ FLOPs\ }&$\text{\ Acc.\ }$ \\
    \midrule
    \multicolumn{5}{c}{\textbf{CNNs}} \\
  \cmidrule{1-5}
    RegNetY-4G & $224^2$& 12M& 4G&80.0\\
    RegNetY-8G & $224^2$ & 25M& 8G&81.7\\
    RegNetY-16G &$224^2$ & 45M& 16G&82.9\\[1pt]
    \midrule
    EffNet-B4  & $380^2$ & 19M& 4.2G& 82.9\\
    EffNet-B5  & $456^2$ & 30M& 9.9G& 83.6\\
    EffNet-B6  & $528^2$ & 43M& 19.0G& 84.0\\[1pt]
    \midrule
    ConvNeXt-T & $224^2$ & 29M& 4.5G& 82.1\\
    ConvNeXt-S  & $224^2$ & 50M& 8.7G& 83.1\\
    ConvNeXt-B  & $224^2$ & 89M& 15.4G& 83.8\\[1pt]
    \midrule
    \multicolumn{5}{c}{\textbf{Transformers}} \\
  \cmidrule{1-5}
    Swin-T  & $224^2$ & 28M& 4.5G& 81.3\\
    Swin-S  & $224^2$ & 50M& 8.7G& 83.2\\
    Swin-B  & $224^2$ & 88M& 15.4G& 83.5\\[1pt]
    \midrule
    \multicolumn{5}{c}{\textbf{SSMs}} \\
  \cmidrule{1-5}
    VMamba-T  & $224^2$ & 30M& 4.9G&82.5\\
    VMamba-S  & $224^2$ & 50M& 8.7G&83.6\\
    VMamba-B  & $224^2$ & 89M& 15.4G&83.9\\[0.5pt]
    \midrule
    MambaVision-T  & $224^2$ & 32M& 4.4G&82.3\\
    MambaVision-S  & $224^2$ & 50M& 7.5G&83.3\\
    MambaVision-B  & $224^2$ & 98M& 15.0G&84.2\\[0.5pt]
    \bottomrule
  \end{tabular}
  \hfill
    \begin{tabular}{l|cc|c|c}
    \toprule
    Method &\text{\ size\ } & \text{\#P.\ }&\text{\ FLOPs\ }&$\text{\ Acc.\ }$ \\
    \midrule
  \multicolumn{5}{c}{\textbf{Transformers}} \\
  \cmidrule{1-5}
    ViT-B/16  & $384^2$& 86M& 55.4G& 77.9\\
    ViT-L/16 & $384^2$ & 307M& 190.7G & 76.5 \\
    \midrule
    DeiT-T   & $224^2$ & 6M& 1.3G& 72.2\\
    DeiT-S & $224^2$  &22M &4.6G&79.8\\
    DeiT-B    & $224^2$ &86M& 17.6G&81.8 \\
    \midrule
    \multicolumn{5}{c}{\textbf{RWKVs}} \\
  \cmidrule{1-5}
    VRWKV-T   & $224^2$ & 6M& 1.2G& 75.1\\
    VRWKV-S & $224^2$  &24M& 4.6G&80.1\\
    VRWKV-B    & $224^2$ &94M&18.2G& 82.0\\
    \midrule
    \multicolumn{5}{c}{\textbf{SSMs}} \\
  \cmidrule{1-5}
    Vim-T   & $224^2$ & 7M& 1.5G&76.1\\
    Vim-S & $224^2$ & 26M & 5.2G&80.3\\
    Vim-B    & $224^2$ & 98M& 18.8G & 81.9\\
    \midrule
    \multicolumn{5}{c}{\textbf{TTTs}} \\
  \cmidrule{1-5}
      ViT$^3$-T   & $224^2$ & 6M& 1.2G&76.5\\
     ViT$^3$-S & $224^2$ & 24M & 4.8G&81.6\\
     ViT$^3$-B    & $224^2$ & 90M& 18.0G & 82.6\\
    \midrule
    \cellcolor{pink!30}\texttt{Vittt-T} & \cellcolor{pink!30}$224^2$ &\cellcolor{pink!30}7M &\cellcolor{pink!30}1.4G& \cellcolor{pink!30}\textbf{77.7} \\
    \cellcolor{pink!30}\texttt{Vittt-S}  & \cellcolor{pink!30}$224^2$ & \cellcolor{pink!30}26M&\cellcolor{pink!30}5.3G&\cellcolor{pink!30}\textbf{81.8} \\
    \cellcolor{pink!30}\texttt{Vittt-B}    & \cellcolor{pink!30}$224^2$ & \cellcolor{pink!30}102M&\cellcolor{pink!30}20.3G&\cellcolor{pink!30}\textbf{82.7} \\
    \bottomrule
  \end{tabular}
  \vspace{-0.2in}
\end{table}

\noindent \textbf{Settings.} We first evaluate \texttt{Vittt} on the ImgeNet-1K dataset~\cite{ImageNet} and compare its performance against the benchmarks from ViT~\cite{dosovitskiy2020vit}, DeiT~\cite{deit}, and linear-complexity Vision-RWKV~\cite{VisionRWKVEA}, Vim~\cite{vim}, ViT$^3$~\cite{han2025vit3}. Despite using longer sequence lengths, hierarchical baselines~\cite{SwinTransformer,RegNet,EfficientNetV1,ConvNeXt,VMamba,MambaVisionAH} are still included as references. For fair comparison, we mainly follow the training recipe of DeiT~\cite{deit} and SwinTransformer~\cite{SwinTransformer}. The training details are provided in the Appendix.

\noindent\textbf{Main Results.} As shown in Tab.~\ref{tab:classification}, \texttt{Vittt-T/S/B} achieves $77.7\%,81.8\%,82.7\%$ Top-1 accuracy on ImageNet classification, respectively. Compared with other methods, \texttt{Vittt-T/S} have significant improvements, which are $+1.6\%$ and $+1.5\%$ better than Vim-T/S. Even for the base size model, \texttt{Vittt-B} surpasses its counterparts DeiT-B, VRWKV-B and Vim-B by $+0.7\%\sim +0.9\%$, demonstrating the strong expressiveness of Vision-TTT at scale. Notably, \texttt{Vittt} even surpasses the strongest baseline ViT$^3$. The reason is that ViT$^3$ performs only one batch gradient descent step, but \texttt{Vittt} replaces it with sequence-aware (i.e. $T/b$) gradient descent steps, enabling continuous adaptation along the visual sequences. 


\begin{table}[tb]
  \centering
  \caption{\small{Performance comparison on downstream COCO2017 detection (\emph{Left}) and ADE20K segmentation (\emph{Right}) tasks. We report the parameters (\#Param.), FLOPs and $\mathrm{AP^b}$, $\mathrm{AP^m}$, $\mathrm{mIoU}$ metrics, respectively. FLOPs are calculated with $1333\times800$ resolution for detection and $512\times512$ resolution for segmentation tasks. }}
  \label{tab:downstream}
  \footnotesize
  \vspace{-0.1in}
  \begin{tabular}{l|cc|cc}
    \toprule
    Method & \#Param.&\ FLOPs\ &$\mathrm{\ AP^b\ }$&$\mathrm{\ AP^m\ }$\\
    \midrule
    ViT-T   & 8M & 147.1G& 41.6& 37.9\\
    VRWKV-T  & 8M& 67.9G& 41.7& 38.0\\
    Vim-T   & 9M & 76.7G & 41.8 & 38.2\\
    ViT$^3$-T & 8M& 69.2G & 42.0& 38.3\\
    \cellcolor{pink!30}\texttt{Vittt-T} & \cellcolor{pink!30}9M&\cellcolor{pink!30}74.3G  &\cellcolor{pink!30}\textbf{42.9}&\cellcolor{pink!30}\textbf{38.7}\\
    \midrule
    ViT-S   & 28M & 344.5G& 44.9& 40.1\\
    VRWKV-S & 29M  &189.9G& 44.8&40.2\\
    Vim-S & 32M & 203.6G& 44.9 & 40.2\\
    ViT$^3$-S & 30M& 195.6G& 45.4& 40.7\\
    \cellcolor{pink!30}\texttt{Vittt-S}  & \cellcolor{pink!30}32M & \cellcolor{pink!30}206.3G&\cellcolor{pink!30}\textbf{46.3}&\cellcolor{pink!30}\textbf{41.4} \\
    \midrule
    ViT-B   & 100M & 893.3G& 46.8& 41.8\\
    VRWKV-B    & 107M &599.0G&46.8& 41.7\\
    Vim-B    & 111M & 611.5G& 46.6& 41.6\\
    ViT$^3$-B & 103M& 597.1G& 47.4& 42.1\\
    \cellcolor{pink!30}\texttt{Vittt-B}    & \cellcolor{pink!30}115M & \cellcolor{pink!30}646.7G&\cellcolor{pink!30}\textbf{48.6}&\cellcolor{pink!30}\textbf{43.0}\\
    \bottomrule
  \end{tabular}
  \hfill
  \begin{tabular}{l|cc|c}
    \toprule
    Method & \#Param.&\ FLOPs\ &$\mathrm{\ mIoU\ }$\\
    \midrule
    ViT-T   & 8M & 20.9G& 42.6\\
    VRWKV-T   & 8M & 16.6G& 43.3\\
    Vim-T   & 9M & 18.4G& 43.4\\
    ViT$^3$-T & 8M& 16.7G& 43.6\\
   \cellcolor{pink!30}\texttt{Vittt-T} & \cellcolor{pink!30}9M &\cellcolor{pink!30}17.8G&\cellcolor{pink!30}\textbf{44.3}\\
    \midrule
    ViT-S   & 28M & 54.0G& 46.2\\
    VRWKV-S & 29M & 46.3G& 47.2\\
    Vim-S & 32M & 49.7G& 47.0\\
    ViT$^3$-S & 30M& 47.7G& 47.6\\
    \cellcolor{pink!30}\texttt{Vittt-S} & \cellcolor{pink!30}32M & \cellcolor{pink!30}50.3G&\cellcolor{pink!30}\textbf{48.4}\\
    \midrule
    ViT-B   & 100M & 157.9G& 48.8\\
    VRWKV-B  & 107M & 146.0G&49.2\\
    Vim-B    & 111M & 149.2G& 48.9\\
    ViT$^3$-B & 103M& 145.7G& 49.4\\
    \cellcolor{pink!30}\texttt{Vittt-B} & \cellcolor{pink!30}115M & \cellcolor{pink!30}157.7G&\cellcolor{pink!30}\textbf{50.3}\\
    \bottomrule
  \end{tabular}
  \vspace{-0.20in}
\end{table}

\subsection{Downstream Tasks}
\vspace{-0.10in}
\noindent \textbf{Settings.} We benchmark \texttt{Vittt} model against other popular non-hierarchical visual sequence modeling methods, including ViT~\cite{dosovitskiy2020vit}, Vision-RWKV~\cite{VisionRWKVEA}, Vim~\cite{vim} and ViT$^3$~\cite{han2025vit3}. For object detection and instance segmentation, we evaluate \texttt{Vittt} on the COCO2017 dataset~\cite{COCO}. The procedure mainly follows the Vision-RWKV recipe~\cite{VisionRWKVEA} to integrate the ViT-Adapter~\cite{Chen2022ViTAdapter} and Mask R-CNN~\cite{MaskRCNN} detection head on our plain \texttt{Vittt} models. The image resolution is scaled up to $1333\times800$. In addition, we perform semantic segmentation on the ADE20K dataset~\cite{ADE20K}. For a fair comparison, we also follow prior works Vim~\cite{vim} and Vision-RWKV~\cite{VisionRWKVEA} to use UperNet~\cite{UperNet} as the segmentation head. All images are cropped into $512 \times 512$. Further details are listed in the Appendix.

\noindent\textbf{Main Results.} Tab.~\ref{tab:downstream} demonstrates the remarkable superiority of Vision-TTT on dense prediction tasks. \texttt{Vittt-T} outperforms the popular Vim-T by $+1.1\%$ $\mathrm{AP^b}$, $+0.5\%$ $\mathrm{AP^m}$ and $+0.9\%$ $\mathrm{mIoU}$. The advantage is further enlarged to $+1.4\%$ $\mathrm{AP^b}$, $+1.2\%$ $\mathrm{AP^m}$ and $+1.4\%$ $\mathrm{mIoU}$ for \texttt{Vittt-S} against Vim-S. For the base size model, where Vim-B fails to scale effectively, \texttt{Vittt-B} still maintains consistent performance gains of $+2.0\%$ $\mathrm{AP^b}$, $+1.3\%$ $\mathrm{AP^m}$ and $+1.1\%$ $\mathrm{mIoU}$ over its VRWKV-B counterpart. Although ViT$^3$ achieves comparable performance with \texttt{Vittt} in classification, it exhibits a considerable performance gap in downstream dense prediction tasks due to limited gradient descent steps for adaptation. In contrast, the maintenance of \texttt{Vittt}'s performance superiority from $512\times512$ (1024 image tokens) to $1333\times800$ (4200 image tokens) effectively demonstrates the scalable representation capability of Vision-TTT's adaptive gradient descent steps, particularly in addressing more challenging long sequence modeling tasks.

\subsection{Efficiency Analysis}

\begin{figure*}[ht!] 
   \vspace{-0.3in}
    \centering 
    \begin{subfigure}[b]{0.32\linewidth}
        \centering
        \includegraphics[width=1.0\linewidth]{images/flops.pdf} 
    \end{subfigure}
    \hfill
    \begin{subfigure}[b]{0.32\linewidth}
        \centering
        \includegraphics[width=1.0\linewidth]{images/throughput.pdf} 
    \end{subfigure}
    \hfill
    \begin{subfigure}[b]{0.32\linewidth}
        \centering
        \includegraphics[width=1.0\linewidth]{images/memory.pdf}
    \end{subfigure}
    
    \begin{subfigure}[b]{0.32\linewidth}
        \centering
        \includegraphics[width=1.0\linewidth]{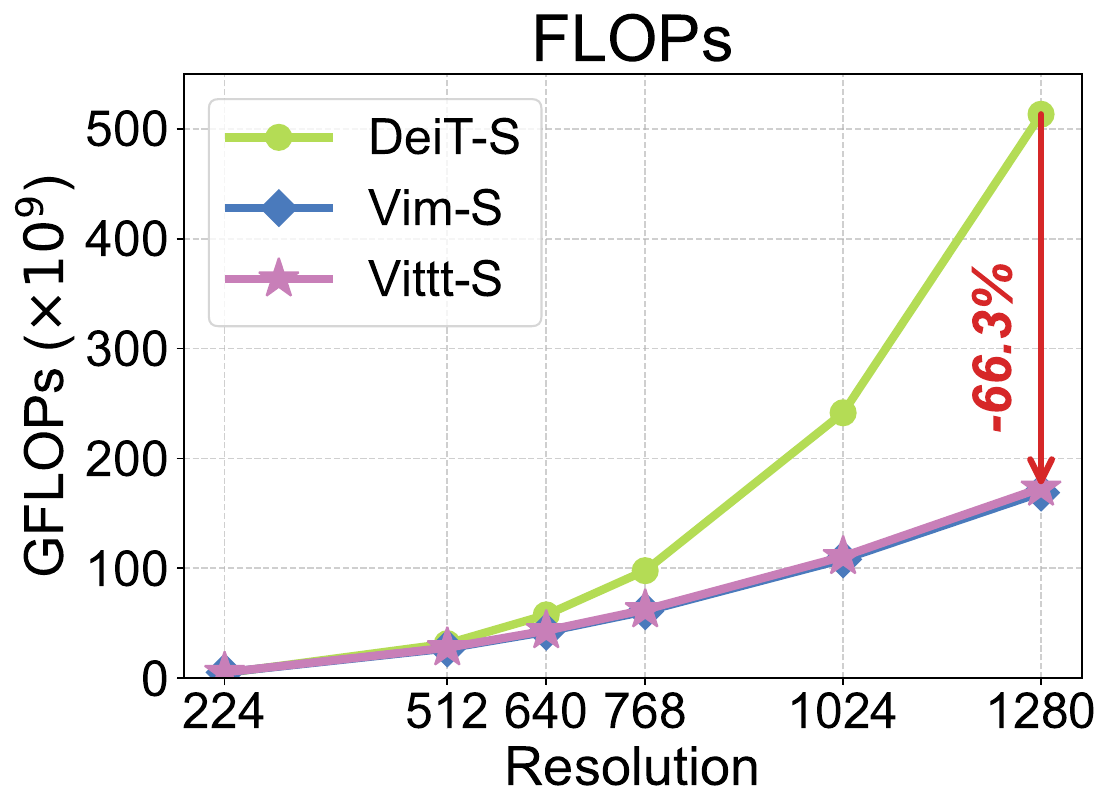} 
    \end{subfigure}
    \hfill
    \begin{subfigure}[b]{0.32\linewidth}
        \centering
        \includegraphics[width=1.0\linewidth]{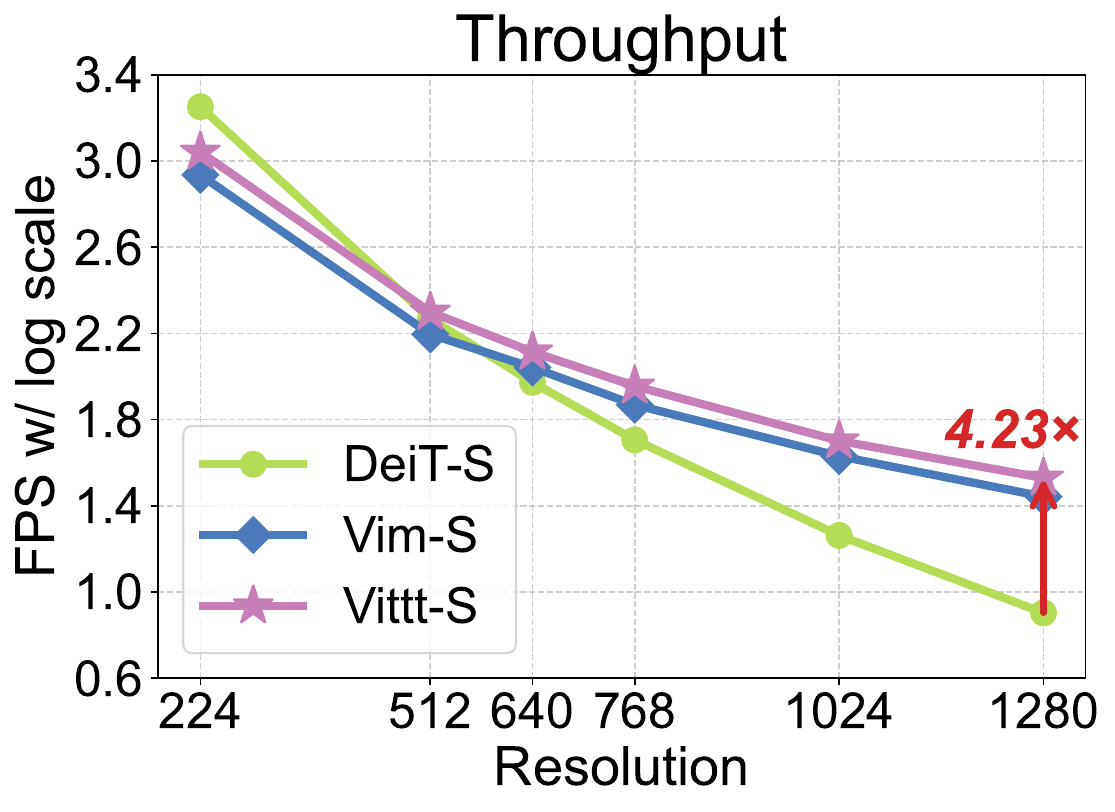} 
    \end{subfigure}
    \hfill
    \begin{subfigure}[b]{0.32\linewidth}
        \centering
        \includegraphics[width=1.0\linewidth]{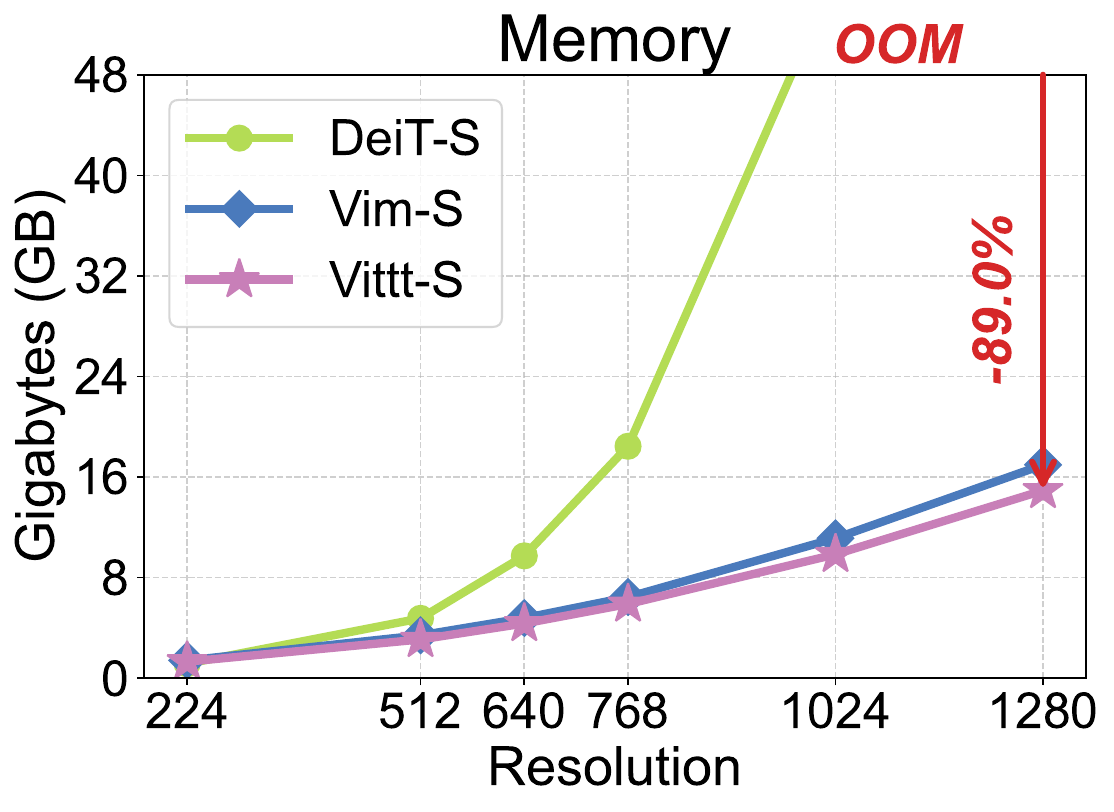}
    \end{subfigure}

        \begin{subfigure}[b]{0.32\linewidth}
        \centering
        \includegraphics[width=1.0\linewidth]{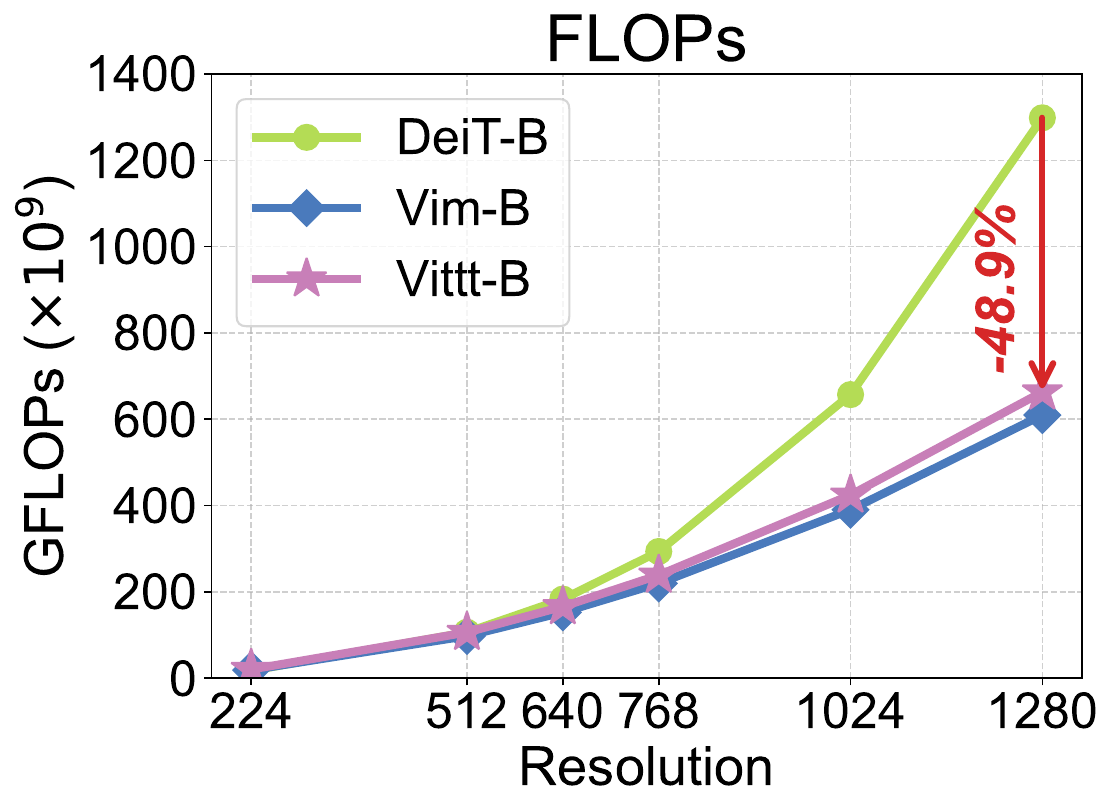} 
    \end{subfigure}
    \hfill
    \begin{subfigure}[b]{0.32\linewidth}
        \centering
        \includegraphics[width=1.0\linewidth]{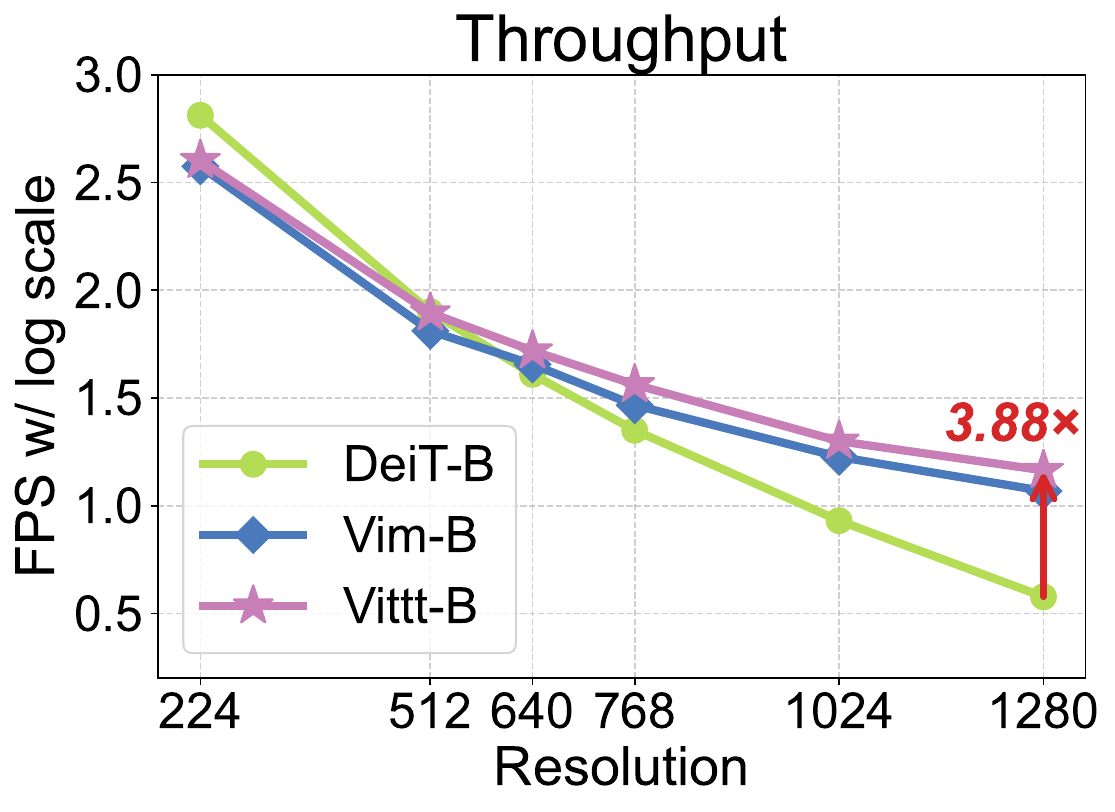} 
    \end{subfigure}
    \hfill
    \begin{subfigure}[b]{0.32\linewidth}
        \centering
        \includegraphics[width=1.0\linewidth]{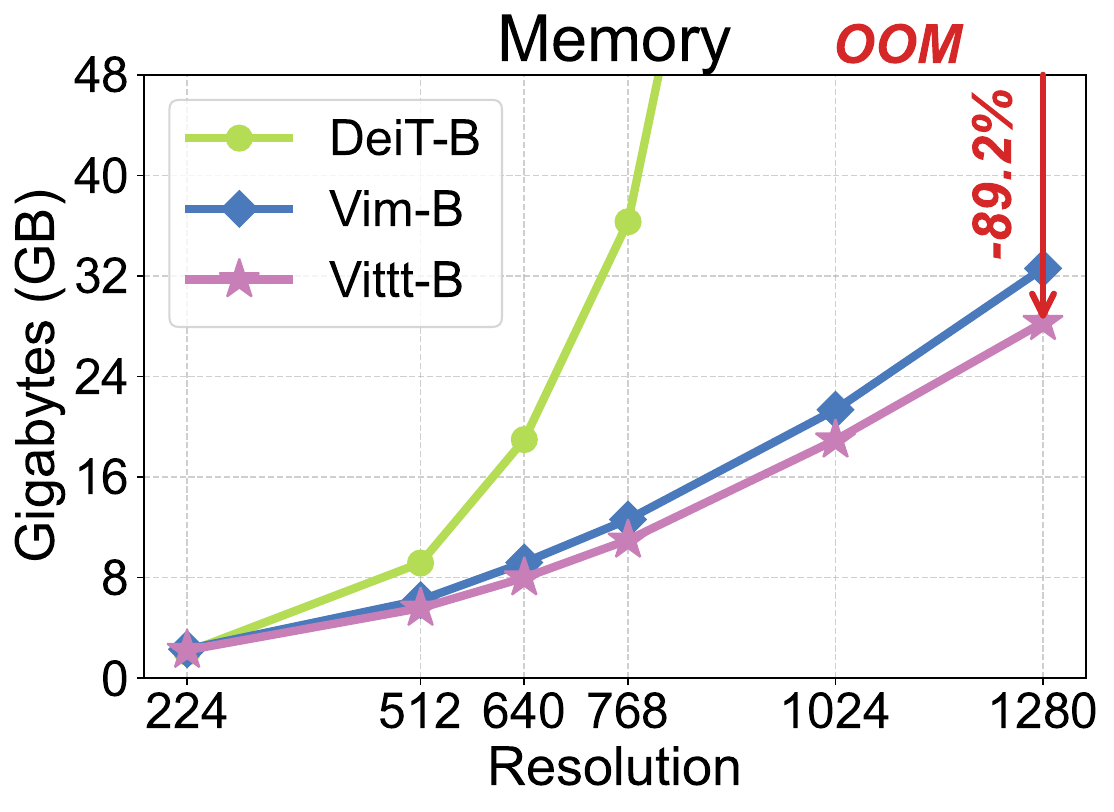}
    \end{subfigure}
    
    \vspace{-0.1in}
    \caption{\small{Efficiency comparison between DeiT~\cite{deit}, Vim~\cite{vim} and our \texttt{Vittt} model. We sketch the FLOPs, throughput, and memory footprint curves for different resolutions ranging from $224\times224$ (i.e. 196 tokens) to $1280\times1280$ (i.e. 6400 tokens). }}
    \label{fig:efficiency}
    \vspace{-0.2in}
\end{figure*}

\noindent\textbf{Settings.} We compare the theoretical complexity and runtime throughput of \texttt{Vittt} with ViT~\cite{dosovitskiy2020vit} and Vim~\cite{vim}. Following~\cite{rw2019timm,SwinTransformer}, we set the batch size to 64 to test the FPS (frames per second) and memory footprint with an L40S GPU.

\noindent \textbf{Computational Efficiency}. Given an input sequence $y\in \mathbb{R}^{1\times T\times D}$, the theoretical computation complexity is given by,
\begin{equation}
\begin{aligned}
& \Omega(\text{ViT})=4TD^2+2T^2D \\
& \Omega(\text{Vim})=6TD^2+18T(2D)N \\
& \Omega(\texttt{Vittt})=6TD^2+6TDd+4bTD,
\end{aligned}
\end{equation}
where $b=16$ denotes the mini-batch size, $d=64$ is the head dimension, and $N=16$ is the state expansion factor of Mamba~\cite{mamba}. By referring to Vim~\cite{vim}, we include the input/output projections, the dicretization steps, the gating operations and then multiply the bidirectional factor to compute the overall complexity of Vim blocks. Obviously, ViT exhibits quadratic complexity w.r.t. the sequence length $T$. In contrast, both Vim and \texttt{Vittt} enjoy linear complexity via RNN-style updates, enabling efficient long-sequence modeling. As shown in the left two columns of Fig.~\ref{fig:efficiency}, the linearly increasing FLOPs and throughput of \texttt{Vittt} align well with the theoretical complexity analysis. Although DeiT achieves slightly higher throughput in processing $224\times224$ images, it slows down quickly with the growth of resolution. At $1280\times1280$ resolution, \texttt{Vittt-T/S/B} saves $79.4\%, 66.3\%, 48.9\%$ FLOPs and achieves $4.72\times, 4.23\times, 3.88\times$ higher FPS than DeiT-T/S/B, respectively. For the linear models \texttt{Vittt} and Vim, \texttt{Vittt-T} has fewer FLOPs than Vim-T, but \texttt{Vittt-S/B} yield slightly more FLOPs than Vim-S/B, implying that \texttt{Vittt} has a larger constant coefficient related to the embedding size $D$. Nevertheless, we observe that \texttt{Vittt} consistently outperforms Vim in speed across different model sizes and input resolutions. This stems from their distinct kernel utilization strategies during execution. While Vim relies on the selective scan mechanism~\cite{mamba} parallelized over CUDA Cores, \texttt{Vittt} directly operates on Tensor Cores, which are $16\times16$ specialized matrix multiplication units that runs several to tens of times faster.

\noindent\textbf{Memory Efficiency}. With the global batch size of $B$, the asymptotic memory complexity is as follows, 

\begin{equation}
\begin{aligned}
&\Omega({\text{ViT}}) = \mathcal{O}(BTD + BT^2) \\
&\Omega({\text{Vim}}) = \mathcal{O}(BTD + BTDN) \\
&\Omega({\texttt{Vittt}}) = \mathcal{O}\left(BTD + BTd +{BTDd}/{b}\right). \\
\end{aligned}
\end{equation}
Linear models of Vim and \texttt{Vittt} alleviate the quadratic memory cost w.r.t. $T$ that ViT demands. Our implementation follows Mamba~\cite{mamba} to incorporate kernel fusion and recomputation techniques, which further reduces the memory footprint to $\mathcal{O}\left(BTD\right)$. As shown in the right column of Fig.~\ref{fig:efficiency}, while the memory consumption of DeiT explodes at high resolutions, the footprint only grows linearly for Vim and \texttt{Vittt}. At $1280\times1280$ resolution, \texttt{Vittt-T/S/B} consume approximately $89.0\%$ less memory than DeiT-T/S/B. We also observe a considerable memory reduction of \texttt{Vittt-S/B} compared to Vim-S/B beyond $1024\times1024$ resolution, indicating that \texttt{Vittt} enjoys a smaller constant factor of memory usage than Vim.

The above experiments consistently demonstrate the linear-complexity computation and memory consumption of Vision-TTT across diverse model scales and sequence lengths. This effectively bypasses the scalability bottleneck of Vision Transformers and renders it highly favorable for high-resolution modeling. For readers with further interest in Vision-TTT's efficiency, we provide the derivation of the computational complexity, the hardware-aware implementation, and additional insights into the memory I/O optimization in the Appendix. 

\vspace{-0.1in}
\subsection{Ablation Study}
\label{sec:ablation}
To verify the effectiveness of our design, we systematically explore the design space to identify favorable configurations. Specifically, we conduct ablation studies on 1) the inner modules of the \texttt{Vittt} block to validate the rationality of our architecture design; 2) different classification strategies to obtain the most compatible supervised learning scheme; 3) mini-batch size $b$ to determine the suitable steps of gradient descent; 4) number of learnable initial states $\mathrm{W_0} \in \mathbb{R}^{nh\times d \times d}$ to enable effective dual dataset learning. The results demonstrate that our design for Vision-TTT is modularly effective, friendly to the inner supervised learning, and achieves a favorable trade-off between model expressiveness and efficiency.

\begin{table}[h]
  \centering
  \caption{\small{Performance comparison for different snapshots on the design route for \texttt{Vittt-T}. Throughput metric FPS (frames per second) is tested with batch size 64.}}
  \label{tab:design_route}
  \footnotesize
    \vspace{-0.1in}
  \begin{tabular}{l|cc|c|c|cc|c}
    \toprule
    Design Route & \#Param.&FLOPs& $\mathrm{\ FPS\ }$ &$\mathrm{\ Acc.\ }$&$\mathrm{\ AP^b\ }$&$\mathrm{AP^m\ }$&$\mathrm{\ mIoU\ }$\\
    \midrule
    TTT layer & 5.87M& 1.14G& 3607 & 74.2 & 40.4 & 36.8& 41.5\\
    + Share Q / K & 5.43M& 1.07G& 3892&74.0& 40.1& 36.5 &41.2\\
    + Gating & 5.89M& 1.15G& 3565&75.2&  41.2& 37.1 &42.0\\
    + Conv1d (vanilla TTT) & 5.90M&1.16G& 3346&75.6& 41.6 & 37.6&42.6\\
    + Dual dataset strategy & 6.96M&1.42G& 2068 &77.5& 42.7 & 38.6 & 44.1\\   
    \cellcolor{pink!30}+ Conv2d dataset preprocessing\ \quad & \cellcolor{pink!30}6.98M&\cellcolor{pink!30}1.44G& \cellcolor{pink!30}2029 &\cellcolor{pink!30}77.7& \cellcolor{pink!30}42.9 & \cellcolor{pink!30}38.7 &\cellcolor{pink!30}44.3\\  
    \bottomrule
  \end{tabular}
 \vspace{-0.2in}
\end{table}

\noindent\textbf{Design Route of the Vittt Block.} The design route presented in Tab.~\ref{tab:design_route} follows the illustration in Sec.~\ref{sec:VitttBlockDesign}. Starting from the TTT layer, the shared Q / K projection first achieves a parameter reduction of $0.44\text{M}$ with no discernible performance degradation. Subsequently, the gating mechanism and Conv1d module are incorporated, endowing TTT with nonlinearity and unidirectional token perception. This integration results in a $14.1\%$ decrease in model throughput, while yielding preliminary gains of $+1.6\%$ in classification accuracy, $+1.5\%$ $\mathrm{AP^b}$, $+1.1\%$ $\mathrm{AP^m}$ for detection and $+1.4\%$ $\mathrm{mIoU}$ for segmentation tasks. To construct the proposed \texttt{Vittt} block, we diversify the visual sequence dataset in two directions. This dual dataset strategy substantially enhances performance by $+1.9\%$ in classification, $+1.1\%$ $\mathrm{AP^b}$, $+1.0\%$ $\mathrm{AP^m}$ in detection and $+1.5\%$ $\mathrm{mIoU}$ in segmentation. The significant improvement justifies the sacrifice of $38.2\%$ in FPS. Finally, the Conv2d dataset preprocessing mechanism further contributes incremental performance gains of $+0.1\%\sim0.2\%$ for different metrics.

\vspace{-0.2in}
\begin{table}[h]
  \centering
  \caption{\small{Performance comparison for different classification strategies on \texttt{Vittt-T}.}}
  \label{tab:classification_strategy}
  \footnotesize
  \vspace{-0.1in}
  \begin{tabular}{l|cc|c|cc|c}
    \toprule
    Method & \#Param.&FLOPs&$\mathrm{\ Acc.\ }$&$\mathrm{\ AP^b\ }$&$\mathrm{\ AP^m\ }$&$\mathrm{\ mIoU\ }$\\
    \midrule
    HeadClassTok & 6.98M& 1.44G& 76.6&  41.6&  37.8& 43.0\\
    MidClassTok & 6.98M& 1.44G& 76.8&  41.7& 37.9 &43.0\\
    DoubleClassTok\quad\quad& 6.98M&1.45G& 75.4& 41.2 &37.2 &42.5\\
    MaxPool& 6.98M&1.44G& 77.2& 42.2 & 38.4 &43.8\\    
    \cellcolor{pink!30}MeanPool& \cellcolor{pink!30}6.98M& \cellcolor{pink!30}1.44G& \cellcolor{pink!30}77.7& \cellcolor{pink!30}42.9 & \cellcolor{pink!30}38.7 & \cellcolor{pink!30}44.3\\
    \bottomrule
  \end{tabular}
  \vspace{-0.2in}
\end{table}

\noindent\textbf{Effects of the Classification Strategy.} Following Vim~\cite{vim}, we experiment with five different ways to obtain the classification feature to identify the optimal scheme for class-supervised learning. The head class token~\cite{dosovitskiy2020vit} refers to insert an extra token before the input patch sequence and using only this token after the output layer to compute the class possibilities. The ``MidClassTok'' and ``DoubleClassTok'' are two variants to insert one class token in the middle or two class tokens before and after the input patch sequence~\cite{vim}, respectively. ``MaxPool'' and ``MeanPool'' are two pooling strategies that extract the $L2$-norm maximum or the average of the output feature sequence. As shown in Tab.~\ref{tab:classification_strategy}, the two pooling strategies achieve significantly better performance than the other three class token strategies, and the ``MeanPool'' strategy is the most suitable for the gradient-driven representation of Vision-TTT.

\begin{figure*}[ht!] 
    \centering 
    \begin{subfigure}[b]{0.49\linewidth}
        \centering
        \includegraphics[width=1.0\linewidth]{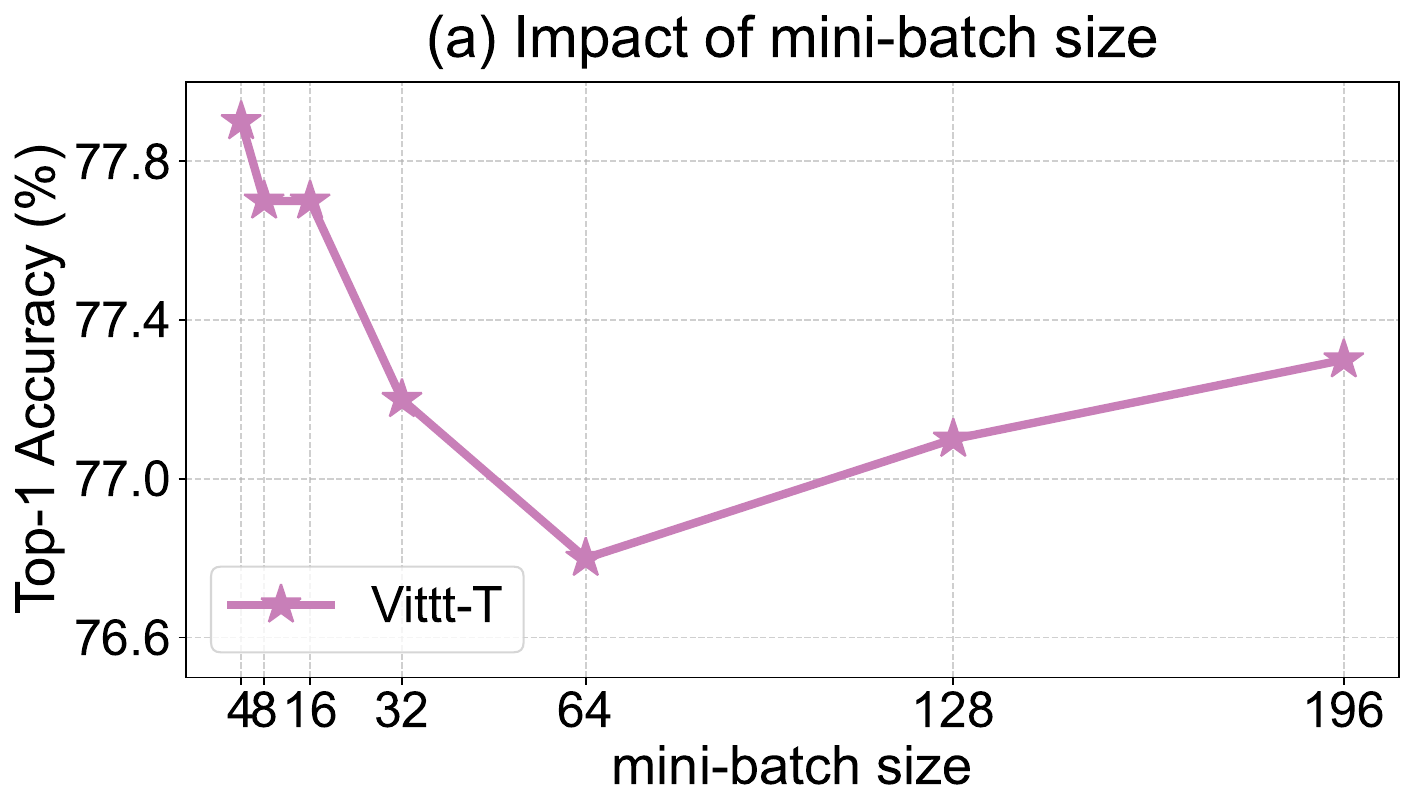} 
    \end{subfigure}
    \begin{subfigure}[b]{0.49\linewidth}
        \centering
        \includegraphics[width=1.0\linewidth]{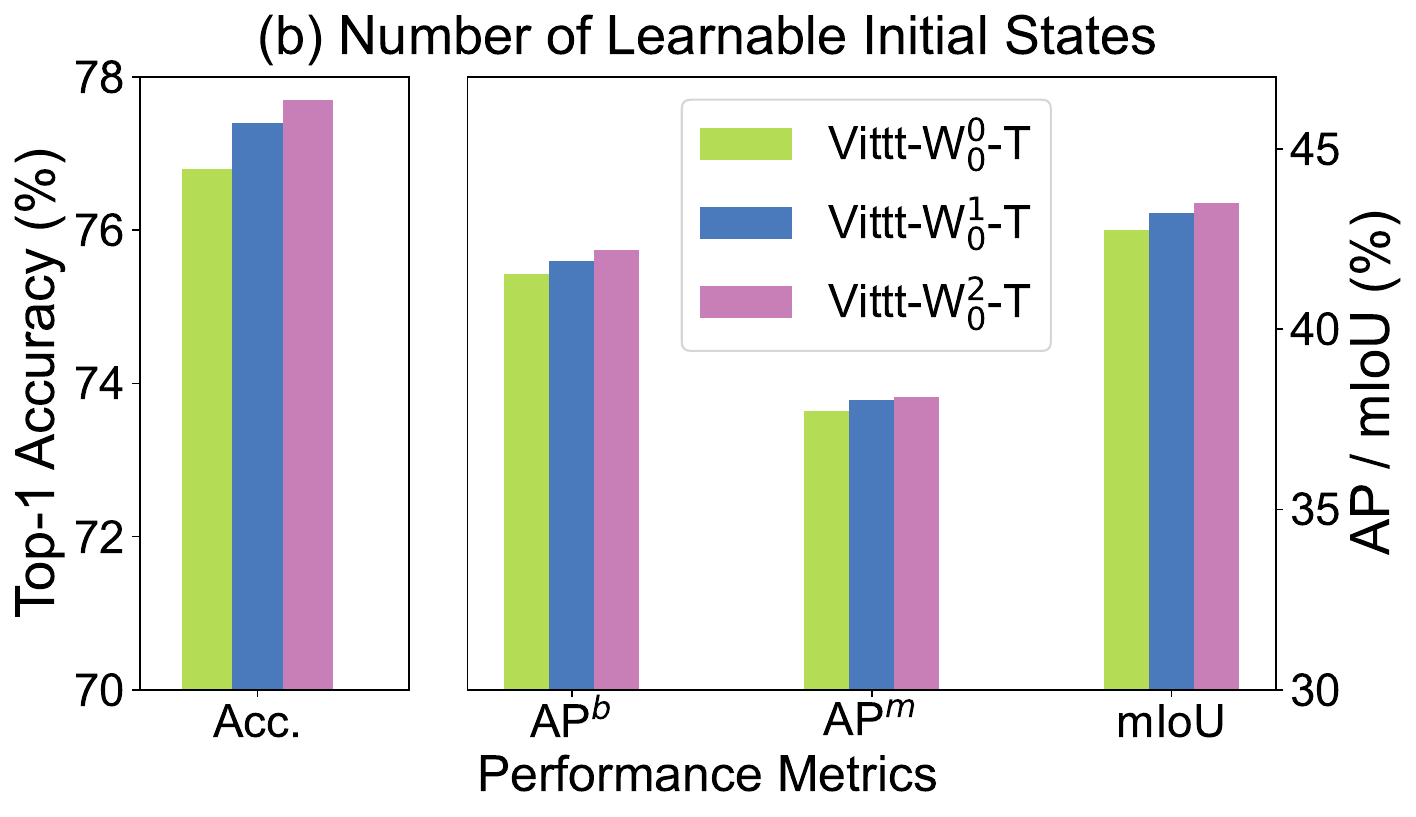} 
    \end{subfigure}
    \vspace{-0.1in}
    \caption{\small{Ablation study of (a) mini-batch size $b$ to perform gradient descent along the sequence length $T$, and (b) number of learnable initial states $\mathrm{W}_0$ (indicated by the superscript number $^0$$^1$$^2$) for the self-supervised learning of dual datasets.}}
    \label{fig:ablation}
    \vspace{-0.2in}
\end{figure*}

\noindent\textbf{Impact of the Mini-Batch Size $b$.} Mini-batch size $b$ along the sequence length $T$ is the crucial hyper-parameter to decide the steps of mini-batch gradient descent (i.e. $\lceil T/b \rceil$ steps). As shown in Fig.~\ref{fig:ablation} (a), with the increase of $b$ from 4 (49 steps of mini-batch gradient descent) to 196 (only one batch gradient descent), the Top-1 classification accuracy first decreases sharply and then grows slightly. The worst case appears in the middle of $b=64$, as there are neither enough gradient descent steps nor a sufficiently global gradient perspective. Fortunately, our kernel implementation of mini-batch size $b=16$ lies in the sweet region from $4\sim16$ to balance the expressiveness and achieved efficiency.

\noindent\textbf{Number of Learnable Initial States.} Recall that in Sec.~\ref{sec:VitttBlockDesign} we adopt dual dataset strategy to achieve bidirectional diversity of 2D spatial data. In the process, there are two initial states $\mathrm{W_0} \in \mathbb{R}^{nh\times d \times d}$ in \texttt{Forth TTT} and \texttt{Back TTT} modules. In such a scenario, the initialization of the hidden states is a noteworthy problem~\cite{sun2024learning}. Unlike general parameter initialization, these states can either be initialized once before training, or set as learnable parameters and trained during supervised learning. We denote the one-time initialization before training as \texttt{Vittt-$\rm{\texttt{W}}_0^0$}, where no hidden states are learnable. The setting \texttt{Vittt-$\rm{\texttt{W}}_0^2$} uses two separate learnable initial states for \texttt{Forth TTT} and \texttt{Back TTT} and \texttt{Vittt-$\rm{\texttt{W}}_0^1$} shares one copy of learnable initial state for the dual datasets. As shown in Fig.~\ref{fig:ablation} (b), more learnable initial states generally reduces interference between the self-supervised learning of dual datasets and leads to improved performance. 

\vspace{-0.1in}
\section{Visualization of Interpretability}
\vspace{-0.4in}
\begin{figure*}[ht!]
    \centering
\includegraphics[width=1.0\linewidth]{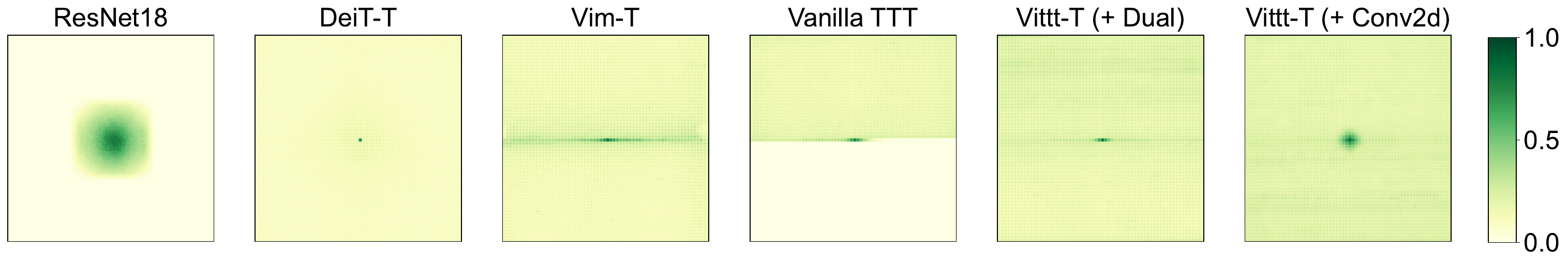}
\vspace{-0.2in}
    \caption{\small{Comparison of Effective Receptive Field (ERF)~\cite{Luo2016UnderstandingTE}. Pixels with higher intensity indicate larger responses related to the central pixel.}}
    \label{fig:erf}
\vspace{-0.2in}
\end{figure*}

\noindent\textbf{Effective Receptive Field (ERF).} The Effective Receptive Field (ERF)~\cite{Luo2016UnderstandingTE,RepLKNet} refers to the region in the input space that contributes to the activations of a specific output unit. As shown in Fig.~\ref{fig:erf}
, we conduct a comparative analysis of the central pixel's ERF across various visual backbones, including ResNet~\cite{ResNet}, DeiT~\cite{deit}, Vim~\cite{vim} and three snapshots of \texttt{Vittt} illustrated in Sec.~\ref{sec:ablation}. From the vanilla TTT to the ``+ Dual'' dataset strategy version, \texttt{Vittt} overcomes the limitations of unidirectional modeling. Further with 2D dataset preprocessing, the ``+ Conv2d'' version ultimately exhibits the globally radial pattern, demonstrating the effectiveness of our design route in Sec.~\ref{sec:VitttBlockDesign}. In addition, we observe that DeiT, Vim, \texttt{Vittt} all achieve global coverage of the pixels, but the 2D-aware distribution of \texttt{Vittt} may intuitively explain its superior expressiveness.

\vspace{-0.2in}
\begin{figure*}[ht!]
    \centering
\includegraphics[width=1.0\linewidth]{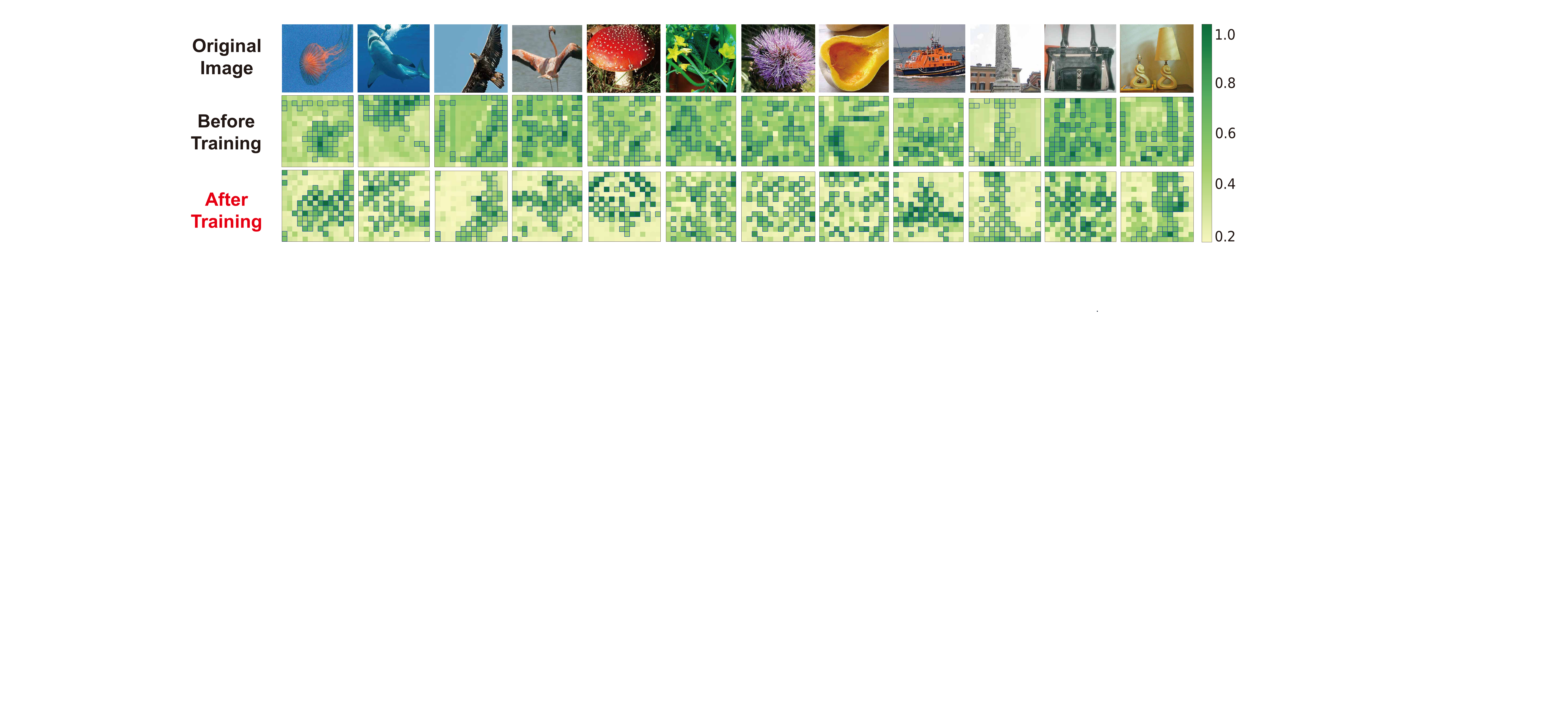}
\vspace{-0.2in}
    \caption{\small{Gradient Magnitude Map of \texttt{Vittt-T} before and after training. Visual tokens with higher intensity indicate steeper gradients (Eq.~\eqref{ttt_Gt})and greater importance.}}
    \label{fig:GMM_2}
\vspace{-0.2in}
\end{figure*}

\noindent\textbf{Gradient Magnitude Map.} Similar to the attention maps~\cite{Transformer} for Vision Transformers and the activation maps~\cite{VMamba} for Vision Mambas, there are also intermediate features to gain an in-depth understanding of Vision-TTT's representation. As shown in Fig.~\ref{fig:GMM_2}, we visualize the Gradient Magnitude Map (GMM) of Vision-TTT before and after training and observe that Vision-TTT develops a discriminative understanding of different input regions. Before training, while the network initialization may partially outline the edges of the main objects, the gradient distribution remains relatively uniform without focusing on specific parts. After supervised learning with target class labels, Vision-TTT prominently highlights the regions with dense visual semantics and diminishes its attention to irrelevant background regions. The GMM serves as a unique interpretive tool for Vision-TTT, offering patch-level explanatory insights that may lay the foundation for future academic advancements of Vision-TTT.

\vspace{-0.1in}
\section{Conclusion}
In this paper, we propose Vision-TTT that introduces a novel sequence modeling method Test-Time Training to the visual representation learning domain. Our method addresses the inherent limitations of Vision Transformers by maintaining a globally radial receptive field meanwhile keeping linear complexity. To adapt the unidirectional TTT model for 2D vision tasks, we propose to utilize the dual dataset strategy for dataset diversity and Conv2d-based module for dataset preprocessing in \texttt{Vittt} blocks. In addition, we integrate kernel implementation for Vision-TTT with linear-complexity parallelism. The significant performance of \texttt{Vittt} across various vision tasks at different model scales, and the verified efficiency for high-resolution images, together demonstrate it as a strong candidate for the next-generation generic visual backbone.


%
%
\bibliographystyle{splncs04}
\bibliography{main}

\clearpage
\begin{center}
  {\LARGE \bfseries Appendix of Vision-TTT\par}
\end{center}
\setcounter{equation}{0}
\appendix
The appendix is intended to elaborate on the implementation details of Vision-TTT and further consolidate the understanding of it as a visual sequence model. Specifically, Appendix~\ref{A} details the kernel implementation; Appendix~\ref{B} describes the algorithm of the \texttt{Vittt} block; Appendix~\ref{C} helps clarify the theoretical modeling capacity of Vision-TTT; Appendix\ref{D} presents the experimental details and Appendix~\ref{E} provides supplementary visual evidence.

\vspace{-0.1in}
\section{Efficient Test-Time Training}\label{A}
\vspace{-0.05in}
Although naive TTT is plagued by serial computation bottleneck, linear wall‑clock time can be achieved with kernel implementation, leveraging the mini‑batch computation pipeline detailed in~\ref{A_1} and the dual-form reformulation in~\ref{A_2}.
\vspace{-0.1in}
\subsection{Mini-batch Gradient Descent}
\label{A_1}
The aforementioned naive TTT layer of updating $W_t$ for every time step $t$ is already efficient in terms of floating point operations (FLOPs) but has insufficient parallelism. Here, we illustrate the special design of the mini-batch gradient descent adopted in~\cite{sun2024learning} to achieve both expressiveness and efficiency. Recall that the general update rule of gradient descent of $W_t$ can be expressed as,
\begin{equation}
W_t=W_{t-1}-\eta G_t=W_0-\eta\sum_{s=1}^tG_s,\nonumber
\end{equation}
where $G_t$ is the descent direction for $W_t$. Once we have calculated $G_t$ for $t=1,...,T$, we can then obtain all the $W_t$ through the \texttt{cumsum} of $\sum_{s=1}^tG_s$. 

\noindent\textbf{Online Gradient Descent.} If we calculate $G_t$ based on the hidden state of the previous step $W_{t-1}$, namely $G_t=\nabla l(W_{t-1};x_t)$, then it is the online gradient descent. Although this is the most expressive setting to utilize full-quantity compression of historical information for reconstruction, it involves intolerable serial computation, leaving most of the computing resources idle. 

\noindent\textbf{Batch Gradient Descent.} If we take all of the gradients w.r.t. $W_0$, then there is only one step of gradient descent for the total sequence. This variant with $G_t=\nabla l(W_0;x_t)$, is known as batch gradient descent. This is the fully-parallelizable version of TTT with actually no explicit adaptation steps, which ends up with relatively poor expressiveness.

\noindent\textbf{Mini-batch Gradient Descent.} Denote the TTT batch size by $b$. We use $G_t=\nabla l(W_{t^{'}};x_t)$, where $t^{'}=t-b$ is the last time step of the previous mini-batch (or 0 for the first mini-batch), so we can parallelize the $b$ gradient computation at a time. Empirically, $b$ controls a trade-off between speed and quantity. Following~\cite{sun2024learning}, we choose $b=16$ by default.

\noindent\textbf{Remarks.} There are two potential channels to propagate information from $W_s$ to $W_t$ for $s<t$: \texttt{cumsum} and the gradient operator. The \texttt{cumsum} is always active, but the gradient operator channel is only active when $W_s$ is from a previous mini-batch. Specifically, the descent step $W_t=W_{t-1}-\eta G_t$ always starts from $W_{t-1}$. Actually, different variants of gradient descent mainly affect the gradient channel, i.e. the descent direction $G_t$, w.r.t. which $W$ the gradient is taken. 

\subsection{Dual Form}
\label{A_2}
\textbf{Background.} Modern accelerators specialize in matrix-matrix multiplications, denoted as \texttt{matmuls}. Without enough \texttt{matmuls} operations, the GPU resource is underutilized. Here, we illustrate how to perform \texttt{matmuls} inside a mini-batch to further improve parallelism. Consider the simplest case of the reconstruction, where $\theta_K=\theta_Q=\theta_V=I$, for only the first mini-batch of size $b$. Consider $f$ as a linear model. The $L2$-norm loss of the self-supervised task at time $t$ is:
\begin{equation}
l(W_0;x_t)=\Vert f(x_t;W_0)-x_t \Vert^2=\Vert W_0x_t-x_t\Vert^2.\nonumber
\end{equation}
Then the natural way is to compute $G_t=\nabla l(W_0;x_t)=2(W_0x_t-x_t)x_t^T$, for $t=1,...,b$. However, we cannot compute all $b$ of the $G_t$s through a single \texttt{matmul} but need $b$ outer products to compute them one by one. Besides, for each $x_t\in R^D$, $G_t$ is $D\times D$, which incurs a larger memory footprint and I/O cost than $x_t$.

\noindent\textbf{Solution.} There is actually no need to materialize $G_1,...,G_b$ as long as we can compute $W_b$ at the end of the mini-batch, and the output tokens $z_1,..,z_b$. Denote $X=[x_1,...,x_b]$, then,
\vspace{-0.2in}
\begin{eqnarray}
W_b&=&W_0-\eta\sum_{t=1}^bG_t \nonumber \\
&=&W_0-2\eta\sum_{t=1}^b(W_0x_t-x_t)x_t^T\nonumber\\ 
&=&W_0-2\eta(W_0X-X)X^T. \
\end{eqnarray}
So $W_b$ can be conveniently computed with a \texttt{matmul}. To compute $Z=[z_1,...,z_b]$,
\begin{eqnarray}
z_t&=&f(x_t;W_t)=W_tx_t\nonumber\\ 
&=&(W_0-\eta\sum_{s=1}^t G_s)x_t\nonumber\\ 
&=&W_0x_t-2\eta\sum_{s=1}^t(W_0x_s-x_s)x_s^Tx_s.\nonumber
\end{eqnarray}
Denote $\delta_t=\sum_{s=1}^t(W_0x_s-x_s)x_s^Tx_s$ and the matrix $\Delta=[\delta_1,...,\delta_b]$. We can derive,
\begin{equation}
\Delta=(W_0X-X)\mathbf{mask}(X^TX),\nonumber
\end{equation}
where $\mathbf{mask}$ is the upper triangle mask with zeros, and the term $W_0X-X$ can be re-used from the computation of $W_b$. Plugging $\Delta$ back, we obtain:
\begin{equation}
Z=W_0X-2\eta\Delta.
\end{equation}
This is the \emph{dual form}, in contrast to the previous \emph{primal form}, where the $G_t$s and $W_s$ are explicitly materialized. As discussed, the two forms are equivalent in the output. For the inherently sequential generation task, the \emph{primal form} already realizes linear-time complexity. Based on the \emph{dual form}, TTT can achieve linear wall-clock time on hardware with pipeline parallelism for training and inference.

\section{Implementation Details of \texttt{Vittt} Block}\label{B}
\subsection{\texttt{Vittt} Block Algorithm}
\vspace{-0.2in}
\begin{algorithm}[!ht]
\caption{\texttt{Vittt} Block}
\label{Algotithm_1}
\begin{algorithmic}[1]
\STATE \textbf{Input}: token sequence $\mathbf{y_{l-1}}: (B, T, D)$          
\STATE \textbf{Output}: token sequence $\mathbf{y_l}: (B, T, D)$       
\STATE \textcolor{gray}{/* Dataset Pre-processing */}
\STATE $\mathbf{y_{l-1}=DWConv(y_{l-1}) + y_{l-1}}$
\STATE $\mathbf{x=Norm(y_{l-1})}$
\STATE \textcolor{gray}{/* Dual-dataset Strategy */}
\STATE $\mathbf{gate=GELU\bigl(Linear^{g}(x)\bigr)}$
\FOR {$\mathbf{d}$ in \{forth, back\}} 
\STATE $\mathbf{x^k =x^q = Linear^{k/q}(x),  x^v = Linear^{v}(x)}$
\STATE $\mathbf{x^k = Conv1d^{k}(x^k), x^q=Conv1d^{q}(x^q)}$
\STATE \textcolor{gray}{/* TTTCore: enumerate the mini-batch along $T$ */}
\FOR {$\mathbf{i}=1$ to$\lceil T/b \rceil+1$}
\STATE $\mathbf{z^{re}_i=W_{i-1}x^k_i}$
\STATE $\mathbf{W_i=W_{i-1}-2\eta(z^{re}_i-x^{v}_i)(x^k_i)^T}$
\STATE $\mathbf{z_i=W_{i-1}x^q_i-2\eta(z^{re}_i-x^v_i)mask\bigl((x^k_i)^Tx^q_i\bigr)}$
\ENDFOR
\ENDFOR
\STATE \textcolor{gray}{/* Gate and merge the output from the two datasets */}
\STATE $\mathbf{z_{forth}^{'}=z_{forth}\bigodot gate}$    
\STATE $\mathbf{z_{back}^{'}=flip(z_{back})\bigodot gate}$  
\STATE $\mathbf{y_{l}=\mathbf{Linear^o}(z_{forth}^{'}+z_{back}^{'})+x}$
\RETURN $\mathbf{y_l}$
\end{algorithmic}
\label{Algorithm_1}
\end{algorithm}
\vspace{-0.2in}
Algorithm \ref{Algorithm_1} illustrates the forward calculation of the \texttt{Vittt} block. Given the input token sequence $\mathbf{y_{l-1}}$ with shape $(B,T,D)$, the goal is to compute the output token sequence $\mathbf{y_l}$ with the same shape. After \texttt{DWConv} and normalization, we obtain the pre-processed input $\mathbf{x}$. Then we have two copies of parameters for \texttt{Forth TTT} and \texttt{Back TTT}. Inside the \texttt{Vittt} layer, we perform linear projection of $\mathbf{x}$ to $\mathbf{x^k,x^q,x^v}$ and apply \texttt{conv1d} to $\mathbf{x^k}$ and $\mathbf{x^q}$ to create separation. The TTTCore reconstructs the token itself with a granularity of mini-batch size $b$. Specifically, it first calculates the forward result $\mathbf{z^{re}_i}$ for reconstruction based on historical encoding $\mathbf{W_{i-1}}$ and current mini-batch $\mathbf{x^k_i}$. Then it updates $\mathbf{W_{i-1}}$ to $\mathbf{W_i}$ with the gradient of the $L2$ norm loss between $\mathbf{z^{re}_i}$ and $\mathbf{x^v_i}$. Finally, the output result $\mathbf{z_i}$ of $\mathbf{W_i}$ and $\mathbf{x^q_i}$ (with the causal masking form) are obtained. All these processes are calculated using matrix multiplication of the \emph{dual form} for efficiency~\cite{sun2024learning}. Outside the TTTCore, there is also a gate and merge operation of the results from the two directions, followed by a linear output projection. 

\textbf{Remarks.} Algorithm~\ref{Algorithm_1} approximately simplifies some details for better illustration. We follow the implementation of~\cite{sun2024learning} to use $\mathbf{f(x)=x+LN(Wx+b)}$ for output rule and multi-head mechanism for $\mathbf{W}$ to reduce the FLOPs. Additionally, the initial hidden state $\mathbf{W_0}$ and learning rate $\mathbf{\eta=\eta_{base}\sigma(Linear^e(x))}$ are both learnable to retain the training stability, where $\mathbf{\sigma}$ is the sigmoid function.  

\subsection{Derivation of the Efficiency Formulae}
Denote $B$ as the batch size, $T$ as the sequence length, and $D$ as the embedding dimension. For \texttt{Vittt}, we adopt a mini-batch size of $b=16$ for hardware efficiency, and the multi-head mechanism for the hidden state $\mathrm{W}\in \mathbb{R}^{nh\times d\times d}$ to reduce FLOPs and memory footprints, where $nh$ is the number of heads, $d=64$ is the head dimension, and $D=nh\times d$. The theoretical computation and memory efficiency formulae are derived as follows.

\noindent\textbf{Computation Efficiency}. Given an input sequence $y\in \mathbb{R}^{1\times T\times D}$, the theoretical computation complexity is given by,
\begin{equation}
\begin{aligned}
& \Omega(\text{ViT})=4TD^2+2T^2D \nonumber\\
& \Omega(\text{Vim})=6TD^2+18T(2D)N \nonumber\\
& \Omega(\texttt{Vittt})=6TD^2+6TDd+4bTD,\nonumber
\end{aligned}
\end{equation}
where $N=16$ is the state expansion factor of Mamba~\cite{mamba}. For ViT~\cite{dosovitskiy2020vit}, $4TD^2$ accounts for the K / Q / V input and output projections and $2T^2D$ is from the calculations associated with the attention map. For Vim~\cite{vim}, $6TD^2$ accounts for the projections of the input, gate, and output with the expanded embedding $E=2D$. The term $18T(2D)N$ is the combination of the $(\Delta,\bar{A},\bar{B}, C)$ discretization steps, the core SSM transformation, and the gating operation multiplied by two for bidirectional scanning paths. For \texttt{Vittt}, by referring to Algorithm~\ref{Algorithm_1}, the first $2TD^2$ is from the linear projection of $\mathbf{x^k,x^q,x^v}$. With mini-batch size $b$, $2bTD$ comes from $T/b$ number of $\mathbf{mask((x_i^k)^Tx_i^q))}$ computation and its multiplication with $\mathbf{(z_i^{re}-x_i^v)}$. Besides, $3TDd$ indicates the total calculation of $\mathbf{W_{i-1}x_i^k,W_{i-1}x_i^q}$ and $\mathbf{(z^{re}_i-x^{v}_i)(x^k_i)^T}$ for all mini-batches in total. Due to the dual dataset strategy, the overall complexity of \texttt{Vittt} is formed by twice the above-mentioned TTT complexity, plus the overhead $2TD^2$ introduced by the gating and output projections. Some other trivial operations inducing less than $\mathrm{o}(bTD)$ calculation have been neglected for complexity analysis.

\noindent\textbf{Memory Efficiency}. Given an input batch $y\in \mathbb{R}^{B\times T\times D}$, the asymptotic memory requirements with batch size $B$ are, 
\begin{eqnarray}
\begin{aligned}
&\Omega(\text{ViT})=\mathcal{O}(BTD+BT^2)\nonumber \\
&\Omega(\text{Vim})=\mathcal{O}(BTD+BTDN)\rightarrow\mathcal{O}(BTD)\nonumber \\
&\Omega(\texttt{Vittt})=\mathcal{O}(BTD+BTd+BTDd/b)\rightarrow\mathcal{O}(BTD). \nonumber
\end{aligned}
\end{eqnarray}

For any sequence model, $\mathcal{O}(BTD)$ memory is the minimal consumption to hold the embedded sequence. For ViT~\cite{dosovitskiy2020vit}, $\mathcal{O}(BT^2)$ accounts for the attention maps. For Vim~\cite{vim}, the major requirement of $\mathcal{O}(BTDN)$ comes from the $T$ intermediate hidden state. For \texttt{Vittt}, as we adopt multi-head mechanism, by referring to Algorithm~\ref{Algorithm_1}, $\mathcal{O}(BbT)$ memory comes from $T/b$ number of $\mathbf{mask((x_i^k)^Tx_i^q)}$ of shape $(b,b)$ and the $\mathcal{O}(BTDd/b)$ memory term comes from $T/b$ number of hidden states each of shape $(nh,d,d)$. Following Mamba~\cite{mamba}, we also incorporate kernel fusion and recomputation techniques to reduce the actual memory I/O to $\mathcal{O}(BTD)$, which decreases the memory consumption by a factor of $d/b=4$.

\section{Theoretical Equivalence}\label{C}
\begin{theorem}[Batch Gradient Descent TTT]\label{thm:linear_attention} Consider the TTT layer with $f(x)=Wx$ as the inner-loop model, batch gradient descent with mini-batch size $b=T$ as the update rule, and $\eta=1/2$, $W_0=0$. Then, given the input sequence $x_1, ..., x_T$, the output rule produces the same output sequence $z_1, ..., z_T$ as linear attention~\cite{LinearAttention}.
\begin{proof} By the definition of the L2-norm reconstruction loss,
\begin{eqnarray}
W_t=\nabla\ell(W_0; x_t)=\nabla \Vert W_{0}\theta_Kx_t-\theta_Vx_t)\Vert^2 =-2(\theta_Vx_t)(\theta_Kx_t)^T. 
\nonumber
\end{eqnarray}
Then with batch gradient descent, 
\begin{eqnarray}
W_t=W_{0}-\eta\nabla\ell(W_0; x_t)=\sum_{s=1}^{t}(\theta_Vx_s)(\theta_Kx_s)^T.
\nonumber
\end{eqnarray}
Plugging $W_t$ into the output rule, we obtain the output token,
\begin{eqnarray}
z_t=f(\theta_Qx_t; W_t)=\sum_{s=1}^{t}(\theta_Vx_s)(\theta_Kx_s)^{T}(\theta_Qx_t),
\label{thm:bgd}
\end{eqnarray}
which is the definition of linear attention~\cite{LinearAttention}.  $\hfill\square$
\end{proof}
\end{theorem}

\begin{theorem}[Online Gradient Descent TTT]\label{thm:more_linear_attention} Consider the TTT layer with $f(x)=Wx$ as the inner-loop model, online gradient descent with mini-batch size $b=1$ as the update rule, and $\eta=1/2$, $W_0=0$. Then, given the input sequence $x_1, ..., x_T$, the output rule produces the output sequence $z_1, ..., z_T$ in an adapted value view form of linear attention.
\begin{eqnarray}
W_t=W_{t-1}-(W_{t-1}\theta_Kx_t-\theta_Vx_t)(\theta_Kx_t)^T, \nonumber
\end{eqnarray}
Plugging $W_t$ into the output rule and for $s=t,...,1$ recursively substituting the left $W_{s-1}$ with $W_s$,
\begin{eqnarray}
\hspace{-0.1in}
&&z_t=f(\theta_Qx_t;W_t)\nonumber \\
&& =[W_{t-1}-(W_{t-1}\theta_Kx_{t}-\theta_Vx_{t})(\theta_Kx_{t})^T](\theta_Qx_t)\nonumber\\
&& =[W_{t-2}-\sum_{s=t-1}^{t}(W_{s-1}\theta_Kx_s-\theta_Vx_s)(\theta_Kx_s)^T](\theta_Qx_t)\nonumber\\
&&=\cdots=W_0(\theta_Qx_t)\nonumber\\
&&+\sum_{s=1}^{t}\underbrace{(\theta_Vx_s-W_{s-1}\theta_Kx_s)}_{\rm{adapted \ value \ view}}(\theta_Kx_s)^T (\theta_Qx_t),
\label{theorem1_eq2} 
\end{eqnarray}
where the value view $\theta_Vx_s$ is adapted with the intermediate state expansion $-W_{s-1}\theta_Kx_s$. This formulation offers the best theoretical expressiveness for $T$ steps of gradient adaptation. $\hfill\square$
\end{theorem}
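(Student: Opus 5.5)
The plan is to unroll the online recursion directly. We work in the reconstruction setting of Eq.~\eqref{ttt_reconstruct} with keys $k_s=\theta_Kx_s$, values $v_s=\theta_Vx_s$ and queries $q_t=\theta_Qx_t$.

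\textbf{Step 1: the single-step update.} With mini-batch size $b=1$ the gradient is taken at the immediately preceding state. By Eq.~\eqref{ttt_Gt} we have $G_s=2(W_{s-1}k_s-v_s)k_s^T$. Inserting $\eta=1/2$ into Eq.~\eqref{ttt_sgd} gives
\[
W_s=W_{s-1}-(W_{s-1}k_s-v_s)k_s^T=W_{s-1}+(v_s-W_{s-1}k_s)k_s^T .
\]
This is exactly the recursion displayed in the statement.

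\textbf{Step 2: telescoping.} Summing the identity for $s=1,\dots,t$ and using that the left-hand sides telescope gives
\[
W_t=W_0+\sum_{s=1}^t (v_s-W_{s-1}k_s)k_s^T .
\]
Equivalently, one can substitute $W_{t-1}$, then $W_{t-2}$, and so on, each time peeling off one rank-one correction. This is the formal version of the ``$\cdots$'' in the displayed chain.

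\textbf{Step 3: apply the output rule.} Right-multiplying by $q_t$ via Eq.~\eqref{ttt_output} yields
\[
z_t=W_0q_t+\sum_{s=1}^t (v_s-W_{s-1}k_s)k_s^Tq_t .
\]
With $W_0=0$ the first term vanishes. The second term has precisely the form of the linear-attention output in the proof of Theorem~\ref{thm:linear_attention}, except that each value $v_s$ is replaced by the adapted value $\tilde v_s:=v_s-W_{s-1}k_s$. This establishes the claimed ``adapted value view'' form.

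\textbf{Sanity check and main obstacle.} As a check, if every gradient in Step~1 were instead evaluated at $W_0=0$, then $\tilde v_s=v_s$ and we would recover Theorem~\ref{thm:linear_attention}. This confirms that the only difference between the two results lies in the gradient channel discussed in Appendix~\ref{A_1}. The one subtle point is that $\tilde v_s$ depends on $W_{s-1}$, which itself depends on all earlier tokens. So the formula is a recursive identity rather than a closed form: it should be stated as $\tilde v_s$ defined by the recursion, with no claim of parallel computability. If needed, I would also unfold $\tilde v_s$ as $v_s-\sum_{r<s}\tilde v_r k_r^Tk_s$ to show the nesting explicitly.

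The final sentence of the statement, about ``best theoretical expressiveness'', is informal. I would justify it only qualitatively: $b=1$ is the unique setting in which every gradient sees all prior updates, so the gradient channel is active for every pair $s<t$.
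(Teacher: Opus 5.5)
Your proposal is correct and follows the paper's own argument. You obtain the rank-one update $W_s=W_{s-1}-(W_{s-1}k_s-v_s)k_s^T$ from $G_s$ with $\eta=1/2$, unroll it (your telescoping sum is the formal version of the paper's repeated substitution of $W_{s-1}$), and apply the output rule to reach $z_t=W_0q_t+\sum_{s=1}^{t}(v_s-W_{s-1}k_s)k_s^Tq_t$, treating the closing ``best expressiveness'' remark only qualitatively, which matches the paper.
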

\textbf{Remarks.} Clearly, the batch gradient descent TTT has limited expressiveness and the online gradient descent TTT suffers from serial computation. While the visual sequence modeling method ViT$^3$~\cite{han2025vit3} adopts batch gradient descent and reduces TTT to a gated linear attention variant, Vision-TTT employs mini-batch gradient descent TTT to construct gated linear attention with $T/b$ ($b=16$) steps of gradient adaptation, maintaining both expressiveness and hardware efficiency.

\section{Experimental Details}\label{D}
\noindent\textbf{Image Classification.} The proposed \texttt{Vittt-T/S/B} are pretrained on ImageNet-1K~\cite{ImageNet} dataset, which contains $1.28$M training images and $50$K validation images from 1000 classes. Following the data augmentation of DeiT~\cite{deit}, we process $224^2$ input images with a total batch size of 1024. The training code is based on SwinTransformer~\cite{SwinTransformer}. Specifically, we employ the AdamW optimizer for 300 epochs with a cosine decay learning rate schedule and 20 epochs of linear warm-up. The initial learning rate is set to 1e-3 and a weight decay of $0.05$ is used. All experiments are performed on $8\times$ L40S GPUs.

\noindent\textbf{Object Detection.} We conduct object detection and instance segmentation experiments on COCO2017~\cite{COCO} dataset, which contains 118K training and 5K validation images. We follow~\cite{VisionRWKVEA,VMamba} to use Mask R-CNN~\cite{MaskRCNN} as our detection head. All models use a $1\times$ training schedule (i.e., 12 epochs) with a total batch size of 16, and AdamW optimizer with an initial learning rate of 1e-4 and weight decay of 0.05. In addition, multi-scale training mechanism is adopted to scale the images by up to $1333\times800$ resolution.

\noindent\textbf{Semantic Segmentation.} We choose the widely-used semantic segmentation dataset ADE20K~\cite{ADE20K}, which contains 20K training and 2K validation images. We follow~\cite{VisionRWKVEA, vim} to use UperNet~\cite{UperNet} as the segmentation head. The AdamW optimizer with an initial learning rate of 6e-5 is employed for the \texttt{Vittt-S/B} models and 12e-5 for the \texttt{Vittt-T} model. All models are trained for 160k iterations with a total batch size of 16 and a weight decay of 0.01. The images are all scaled to $512\times512$ resolution.

\section{Comprehensive Visualizations} \label{E}
To support Vision-TTT's clear understanding of the images, we provide more visualizations of the gradient magnitude map of different classes from ImageNet-1K validation set (Fig.~\ref{fig:appendix_gmm}). Moreover, we also visualize the reconstruction loss across layers (Fig.~\ref{fig:appendix_loss}) to empirically reveal Vision-TTT's inner self-supervised learning dynamics for continuous adaptation across mini-batches. 
\begin{figure*}[h]
    \centering
    \includegraphics[width=1.0\linewidth]{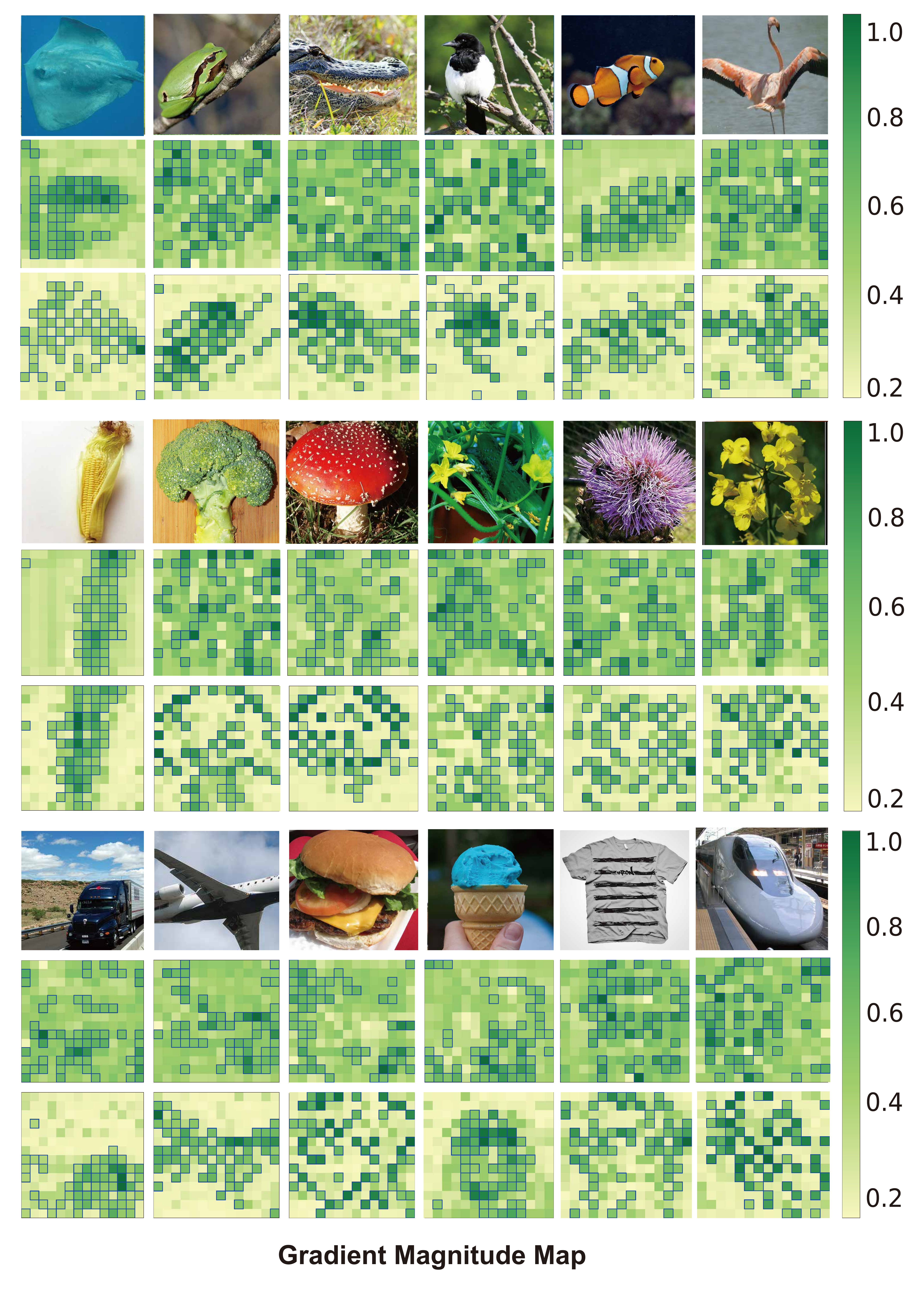}
    \caption{\small{The Gradient Magnitude Map (GMM) of Vision-TTT, which employs gradient indicator $G_t$ to explicitly quantify token importance. Before training (the first row behind the images), the model attributes almost the same gradient magnitude to each token. After training by supervised class label signal (the second row behind the images), the marked Top-30\% tokens clearly show the global semantics of the images.}}
    \label{fig:appendix_gmm}
\vspace{-0.2in}
\end{figure*}
 
\begin{figure*}[h]
    \centering
    \includegraphics[width=1.0\linewidth]{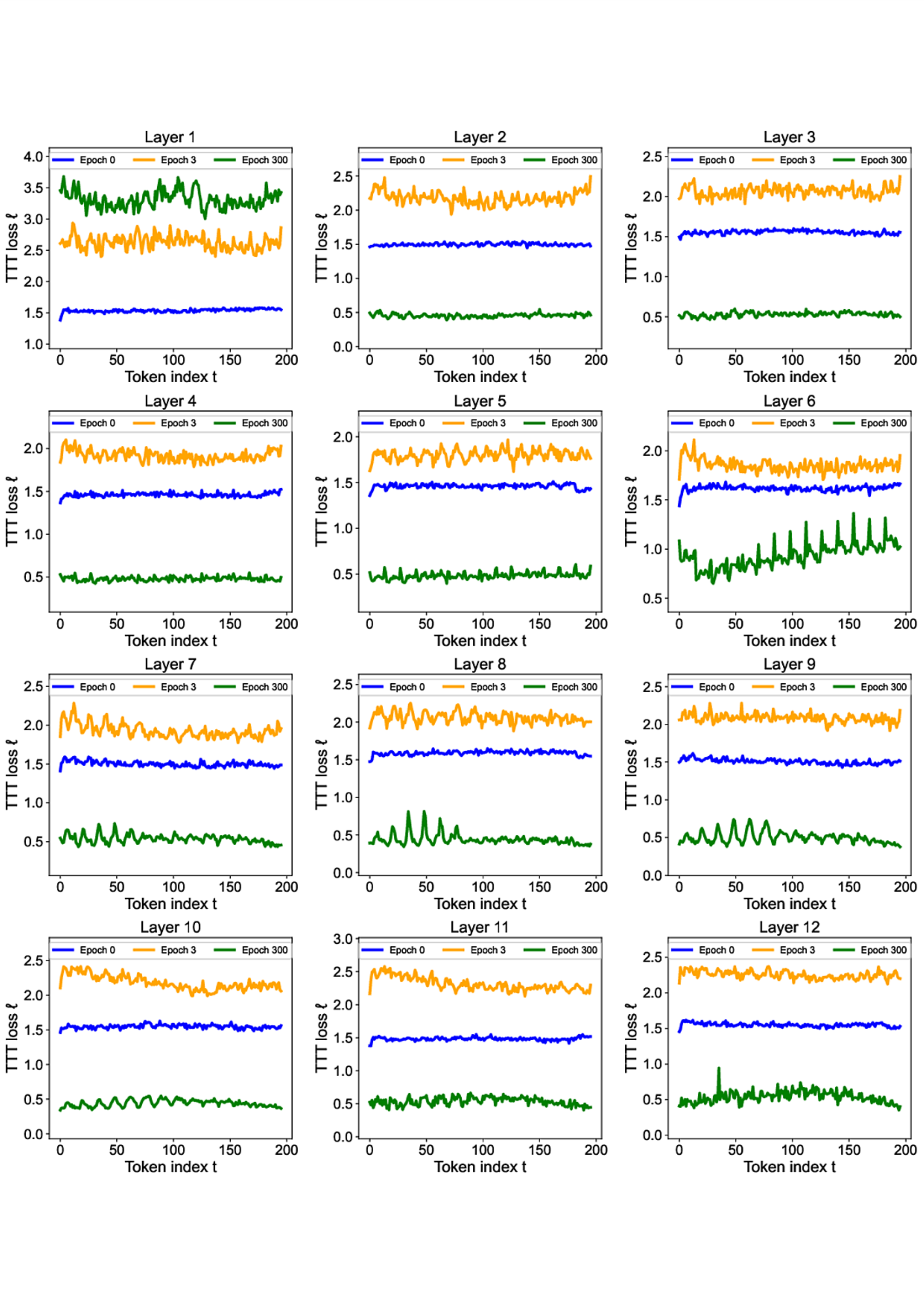}
    \caption{\small{The self-supervised reconstruction loss (L2 norm) of all the 196 tokens for epoch 0, 3, 300 inside 12 layers of \texttt{Vittt-T} (take the \texttt{Forth TTT} as the example), averaged over 128 images. For epoch 0 before training, the reconstruction loss is fairly equal among all the tokens due to random initialization of the network. During training, we observe that the losses initially rise to a certain level before finally dropping to a lower region. This is because at the very beginning the classification signal dominates the optimization to gain more drastic drop of classification loss. As the training progresses, this effect weakens. At a certain stage (around epoch 20-30), the TTT reconstruction loss starts to take over the leadership and performs deeper adaptation. For epoch 300, the final losses among the 12 layers show different patterns for diverse adaptability. Note that they all stay above zero for non-decaying adaptation across mini-batches.}}
    \label{fig:appendix_loss}
\vspace{-0.2in}
\end{figure*}

\end{document}